\newtheorem{lemma}{Lemma}
\def\nn{\nonumber}
\def\defeq{\triangleq}
\def\mc{\mathcal}
\def\LM{\mathrm{LM}}
\title{From Caesar Cipher to Unsupervised Learning:\nn\\  A New Method for Classifier Parameter Estimation}
\author{%
  Yu Liu \thanks{Yu~Liu and Li~Deng are with AI Research of Citadel LLC, Seattle, WA. Jianshu~Chen is with Tencent AI Lab, Belleve, WA. And~Chang~Wen~Chen is with Department of Computer Science and Engineering, State University of New York at Buffalo. The research reported in this paper is based on the Ph.D. thesis work of the first author and on the research carried out at Microsoft Research while he was a research intern there.} \\
  AI Research\\
  Citadel LLC\\
   \And
   Li Deng \\
   AI Research\\
   Citadel LLC \\
   \And
   Jianshu Chen \\
   AI Lab \\
   Tencent\\
   \And
   Chang Wen Chen \\
   Computer Science and Engineering\\
   University at Buffalo, SUNY \\
}
\begin{document}
\maketitle

\begin{abstract}
Many important classification problems, such as object classification, speech recognition, and machine translation, have been tackled by the supervised learning paradigm in the past, where training corpora of parallel input-output pairs are required with high cost. To remove the need for the parallel training corpora has practical significance for real-world applications, and it is one of the main goals of \emph{unsupervised learning} aimed for pattern classification with label-free data sources for training. Recently, encouraging progress in unsupervised learning for solving such classification problems has been made and the nature of the challenges has been clarified. In this article, we review this progress and disseminate a class of promising new methods to facilitate understanding the methods for machine learning researchers. In particular, we emphasize the key information that enables the success of unsupervised learning --- the sequential statistics as the distributional prior in the labels as represented by language models. Exploitation of such sequential statistics makes it possible to learn parameters of classifiers without the need to pair input data and output labels. The idea of exploiting sequential statistics for our unsupervised learning is inspired by an ancient encryption technique called Caesar Cipher. \nn\\In this paper, we first introduce the concept of Caesar Cipher and its decryption, which motivated the construction of the novel loss function for unsupervised learning we use throughout the paper. The loss function serves as the basis for a novel learning algorithm that forms the core content of this paper. Then we use a simple but representative binary classification task as an example to derive and describe the unsupervised learning algorithm in a step-by-step, easy-to-understand fashion. We include two cases, one with Bigram language model as the sequential statistics for use in unsupervised parameter estimation, and another with a simpler Unigram language model. For both cases, detailed derivation steps for the learning algorithm are included in the paper. Further, a summary table of computational steps in executing the unsupervised learning algorithm for learning binary classifiers is provided. In the summary, we also include a comparison between the computational steps for the use of Unigran and Bigram language models, respectively.
\end{abstract}

\section{Introduction and Background}


The long-standing success of supervised learning, especially with the use of deep neural networks on big data since 2010 or so, has brought to machine learning practitioners a tremendous opportunity to solve a wide range of challenging problems in pattern recognition and artificial intelligence \citep{LeCun2015,Bahdanau2014Neural,Deng2013,DNN_SPMagazine12,Abdel-Hamid2014}. Classification or input-to-output prediction is one of the most fundamental problems in machine learning, and it has typically been formulated in the setting of supervised learning, where the learning algorithms require both input data and the corresponding output labels in pairs (i.e. parallel data) to learn the input-to-output prediction with the model. 

However, the acquisition of output labels to pair with the input data for supervised learning is often very expensive, especially for large-scale deep-learning systems that generally require big data for the training. It is not uncommon that thousands of hours of human labor are devoted to manually label a considerable size of a dataset for a specific task before the training can even start to take place. For example, over ten million images had been hand-annotated to build the ImageNet dataset for the image classification task \citep{ImageNet}. Each image was annotated with reasonably unambiguous tags to indicate what notable objects are included. For large-scale speech recognition, even greater amounts of annotated data are required to achieve the desired high accuracy \citep{YuDeng2015,DengHinton2013}. Collecting such labeled datasets incurs high cost in human labor and thus prevents the modern deep learning systems based on the current supervised-learning paradigm from scaling out to even greater data in training. It is thus highly desirable to invent methods that would enable the training of large-scale classification models without using labels that match the data on the sample-by-sample basis.
 
This calls for \emph{Unsupervised learning} in the context of predicting output labels from input data. While unsupervised learning has been studied for decades in the literature, none of common existing unsupervised learning methods can handle the task described above, For example, the most common unsupervised learning methods of clustering analysis include K-means and spectral clustering \citep{Ng2002}. They learn to exploit the structure of input data and extracts grouping patterns by exploring the inter-relation between data samples. Clustering analysis alone cannot address the problem of predicting output labels from input data. 

There are many classes of unsupervised learning methods, i.e. learning without labels, developed in recent years. In one class, structure of the data in terms of density functions is modeled (no labels needed), either explicitly \citep{Oord2016,Oord2016a} or implicitly \citep{goodfellow2014generative}, giving a generative model of the data when one can draw samples and enabling one to examine what the model has learned or otherwise \citep{eslami2018neural,Jakab2018unsupervised,Gupta2018learning}. The generative models can also be used to imagine possible scenarios for model-based reinforcement learning and control. 

In another type, the structure or inherent representations of the data are modeled not by deriving its density functions but by exploiting and predicting the temporal sequence (future and history) of the data \citep{Finn2016unsupervised,Mikolov2013}. A related class of unsupervised representation learning is to model the hierarchical internal representations of input data not via sequence prediction but through modeling latent representations including graph representations \citep{vincent2010stacked,DengHinton2010,kingma2013auto,Yang2018glomo}. 

The next class of unsupervised learning methods targets the task of classification by exploiting the structured bidirectional mapping between input and output, both of the same modality such as images or texts. These methods take advantage of the similarity of information rates in input and output, enabling both forward and inverse mappings and accomplishing impressive tasks of image-to-image translation \citep{Mejjati2018unsupervised,Kazemi2018unsupervised,Zhu2017} and of language-to-language translation  \citep{Artetxe2017,Ranzato2017,Yang2018unsupervised}. However, when the input and output are of substantially different information rate or of different modalities --- e.g. mapping from image to text or from speech to text --- these methods would not work because the forward and inverse mappings can no longer take the same functional form as required by these methods. 

Related to the above class of methods but without constraining input and output to be of the same modality, another popular class of unsupervised learning for classification is to perform pre-training without labels \citep{Ramachandran_2017}. Subsequently fine-tuning the pre-trained parameters is carried out with limited labels. This class of methods has accounted for the early success of deep learning in speech recognition \citep{YuDeng2010,dahl2012context,DengYubook2014,YuDeng2015}. Very recently since October of 2018, they have also shown large success in natural language processing \citep{BERT2018,OpenAI-GPT2}.

However, to solve classification problems helped by the stage of unsupervised pre-training still requires labeled data in the fine-tuning stage. When labeled data is not so costly, this problem is not very serious. To be completely label free, it is desirable to carry out end-to-end learning via direct optimization. A class of new methods with this motivation have been developed in recent years, and are the focus of this paper.

We in this paper introduce a most interesting class of unsupervised learning methods aimed to address the problem of classifying output labels (typically of low information rate) from input data (typically of high information rate) in parametric models via direct optimization, where, unlike all previous methods, the training of the model parameters does not require explicit labels for each training sample. The essence of the methods is a novel objective function for the training, together with an uncommon technique of optimization, called stochastic primal-dual gradient (SPDG) \citep{Convex,Chen2016}. These methods have recently been developed and shown to be effective in an unsupervised optical character recognition task \citep{NIPS2017}. Since the material in \citep{NIPS2017} is general and dense in mathematical treatment and written mainly for machine learning experts, this article is aimed to disseminate detail of the unsupervised learning methods and to serve as a tutorial for signal processing readers who are less experienced in machine learning. The article will first intuitively introduce a specific method, starting from an inspiration of decryption of Caesar Cipher for motivating the construction of the objective function. We will then formulate the problem by connecting this Caesar Cipher problem to a simple binary classification problem whose model training would not require data-label pairs. The deciphering procedure in Caesar Cipher contains three critical steps, which can be closely linked to the analogous three steps in the SPDG optimization method. 

The remaining content of this article is organized as follows. Section \ref{SECTION:Decryption} introduces the procedure of decrypting Caesar Cipher, which consists of three critical steps. Next, in Section \ref{SECTION:Formulation}, we connect the binary classification problem to this Caesar Cipher problem, and formulate the problem as an optimization problem for a novel objective function that leverages a prior distribution of labels or a ``(unigram) language'' model. In Section \ref{SECTION:AnEquivalent}, We show that this objective function for unsupervised learning is intractable with common stochastic gradient descent (SGD) techniques. Thus the optimization problem is transformed to an mathematically equivalent, saddle-point, problem. The saddle-point problem can be approached by a variant method of SPDG, called primal-dual method. In Section \ref{SECTION:Extension}, we extend the formulation and optimization of our unsupervised learning problem from the use of simple unigram language model to a more complex bigram language model. In Section \ref{SECTION:Experiment}, we show our experimental results, where we create a synthetic dataset and apply the unsupervised learning method with the use of unigram and bigram language models, respectively. We conclude and summarize the article in Section \ref{SECTION:Conclusions}.

 


\section{Decryption of Caesar Cipher --- A Motivating Example of Unsupervised Learning}
\label{SECTION:Decryption}

\subsection{Caesar Cipher}

The Caesar cipher, also known as shift cipher, is one of the simplest forms of encryption. It has been first used by Julius Caesar, the well-known Roman general who led to the rise of the Roman Empire. And the encryption technique was named after him. Caesar Cipher is a substitution cipher where each letter in the plain text, or the \emph{original message}, is replaced with another letter corresponding to a certain number of letters up or down in the alphabet. For example, in Figure \ref{fig:caesarcipher}, the message in the plain text is ``THE QUICK BROWN FOX JUMPS OVER THE LAZY DOG''. With a shift of 3 letters up, the encrypted message becomes ``QEB NRFZH YOLTK CLU GRJMP LSBO IXWV ALD''.  

\begin{figure}[h]
		\centering
   		\includegraphics[width = .8\linewidth]{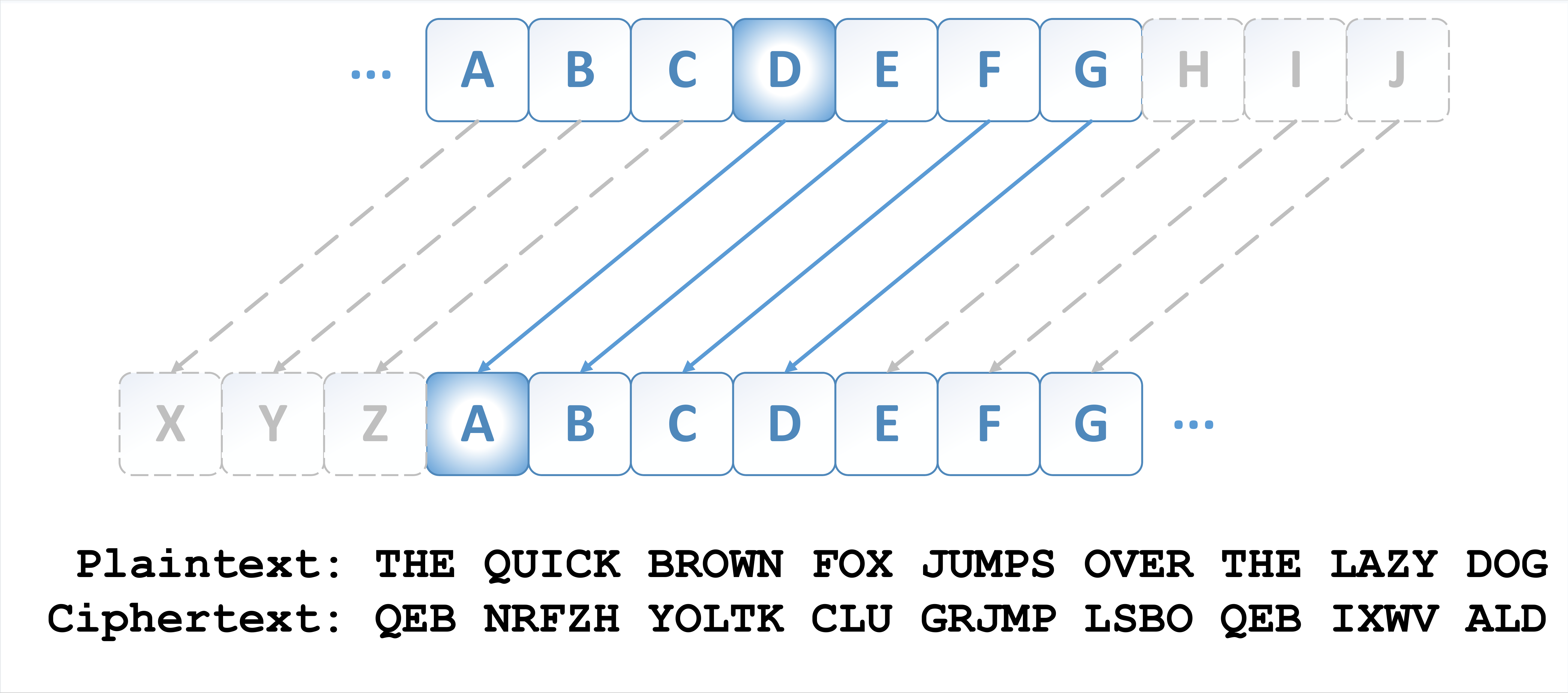}
   		\caption{Illustration of Caesar Cipher with shift-3 up}
   		\label{fig:caesarcipher}
	\end{figure}
    
\subsection{Decryption of Caesar Cipher}


To decrypt Caesar Cipher with unknown shift, we need three steps: 1) acquiring standard English statistics as a ``prior''; 2) calculating the same statistics of the encrypted message, and 3) matching the two statistics with a trial-and-error scheme to recover the shift. Here we assume that the encrypted message is large enough that the number of letters has statistical significance. The process is illustrated in Figure \ref{fig:letterfrequency} and explained in details as follows \cite{OnlineTool}:
   
\textbf{Step 1: Prior standard English statistics} To discover the correspondence between the encrypted letters and standard English letters, we first need to know precisely how the encrypted letters look like in the standard alphabet. To this end, for example, one can take a large bulk of standard English corpus (e.g.  newspapers) and calculate the frequency or histogram of each of 26 English letters, as shown in lower part of the Figure \ref{fig:letterfrequency}. 

\textbf{Step 2: Encrypted message statistics} Similarly, we can determine the same statistics on our encrypted original message. This can be accomplished by counting the frequency of each letters in the encrypted message. We may give the histogram that looks like the one on the upper part of Figure \ref{fig:letterfrequency} (after a shift).

\textbf{Step 3: Matching the two statistics} With the two histogram charts side by side, one can easily try out in a brute-force way all of 25 shifts and find the best match, as shown in Figure \ref{fig:letterfrequency}. After determining the shift, one can obtain the shifted alphabet and recover the original message. 

\begin{figure}[h]
		\centering
   		\includegraphics[width = .8\linewidth]{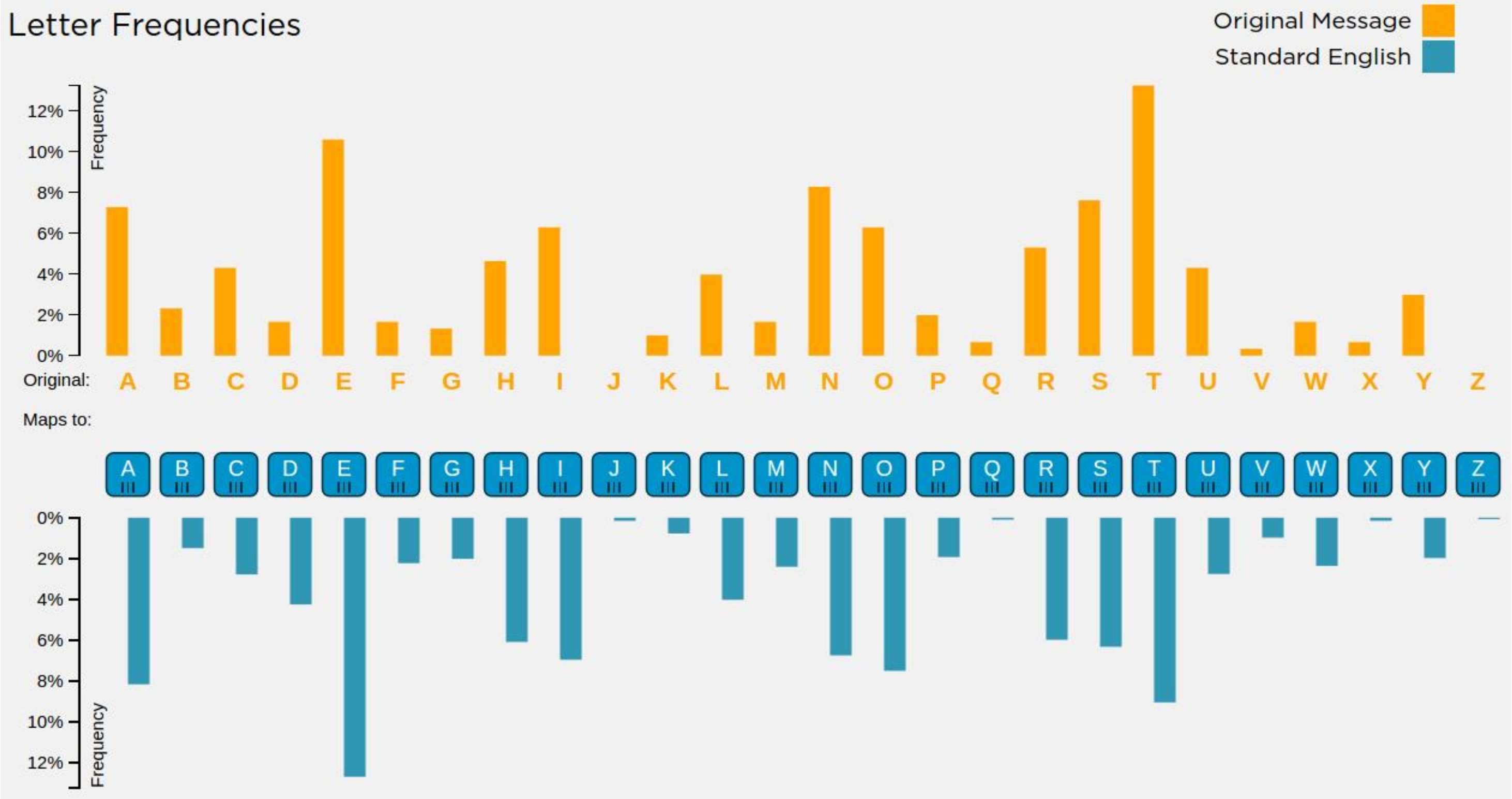}
   		\caption{Letter frequency analysis and match between standard English and original message. [Screenshot from Caesar Cipher's online tool \citep{OnlineTool}]}
   		\label{fig:letterfrequency}
	\end{figure}

\section{Formulation of Unsupervised Learning}
\label{SECTION:Formulation}

The above motivating example of deciphering the original messages (input) from the encrypted message by Caesar Cipher (input) highlighted the importance of exploiting the prior output statistics of English letters. Note that the three-step procedure from input to output described in Section II does not require the kind of supervised learning with costly input-output pairs in the training set. We now formulate the unsupervised learning problem without such a requirement as well.

\subsection{Formulation of the Unsupervised Learning Problem}

Our unsupervised learning problem here is formulated in the simplest setting of binary classification where no input-output pairs are needed in the classifier training. The input $\mc{X}=(\mathbf{x_1}, \dots ,\mathbf{x_T})$ is a sequence of 2-dimensional vectors, where $\mathbf{x_t}=(x_t^a,x_t^b) \in \mathbb{R}^{2 \times 1}$ is a 2-dimensional vector. The ground-truth labels $\mc{Y}=(y_1, \dotsm y_T)$ are the corresponding sequence, where each $y_t$ denotes the class that $x_t$ corresponds to. Note that the ground-truth labels are not used in the training. Here, analog to Caesar Cipher, the input sequence $\mc{X}$ acts as the encrypted message, and $\mc{Y}$ is the original message to recover. 

Importantly, as any original English message is subject to English language rules, we also ask that $y_t$ be subject to a given ``rule''. In this context, we call this rule as ``language model'' in the form of N-gram, i.e. the statistics of $y_t \sim p(y_N|y_{N-1},\dots ,y_1)$.  In case of $N=1$, the language model reduces to the basic frequency distribution, i.e. Unigram, just like the statistics used in the example of Caesar Cipher. In case of $N>1$, the language model describes the sequential pattern by predicting the next letter in the form of a $(N-1)$-th order Markov model, e.g. $N=2$ the Bigram model. We will formulate unsupervised learning in detail assuming the Unigram model in this section, and walk through the Bigram case in Section V. 

Our problem here is very similar to that of decryption of Caesar Cipher; i.e. given only the N-gram statistics (analogous to the Prior standard English statistics) and a sequence $(\mathbf{x_1}, \dots ,\mathbf{x_T})$, which is analogous to the original message) as the input, we desire to predict the true label sequence $(y_1, \dotsm y_T)$, which is analogous to the decrypted message).

\subsection{An Objective Function for Unsupervised Learning}

Since the problem discussed here for pedagogical purposes is binary classification, we use the most common model, the log-linear model, as our classifier. In the log-linear model, the output of the classifier is a posterior probability. Recalling that the input is two-dimensional $\mathbf{x_t}=(x_t^a,x_t^b)$, we define the model that has two learnable parameters denoted as $\theta=\{ w^a, w^b \}$:
	\begin{align}
		&p_{\theta,t}(0)
        \defeq
        p_{\theta}(y_t=0 | \mathbf{x_t}) 
        = 
        \frac{ e^{\gamma w^a x^a_t} }{e^{\gamma w^a x^a_t} + e^{\gamma w^b x^b_t}}
        \nn\\
        &p_{\theta,t}(1)
        \defeq
        p_{\theta}(y_t=1 | \mathbf{x_t})
        =
        \frac{ e^{\gamma w^b x^b_t} }{e^{\gamma w^a x^a_t} + e^{\gamma w^b x^b_t}}
        \label{Equ:logLinearModel}
	\end{align}
where $\gamma$ is a constant (e.g. we fix $\gamma=10$ in our experiments described in Section VI). These formulas calculate the probability of producing $y_t$ when receiving input $\mathbf{x_t}$. For example, if $p_{\theta,t}(0) \geq 0.5$, we predict label $\bar{y_t}=0$; otherwise we predict label $\bar{y_t}=1$. Note that from Eqn.(\ref{Equ:logLinearModel}), we always have $p_{\theta,t}(0)+p_{\theta,t}(1)=1$.

We approach the binary classification problem with unsupervised learning in three steps, analogous to decrypting Caesar Cipher described in Section III-B. The corresponding three main steps are: 1) Determine language model statistics as the prior on labels $\mc{Y}$; 2) Calculate the statistics on the classifier output, and 3) Match the two statistics with which a cost function is determined. Next we carry out this procedure step-by-step to create the cost function.

\textbf{Step 1: Prior language model statistics.} Since the dataset we use is synthetic data, i.e. both $\mc{X}$ and $\mc{Y}$ are known, we can easily obtain the statistics of a Unigram language model by calculating the probabilities below with all $ y_t \in \mc{Y} $ given in the dataset:
\begin{align}
		\mathbf{P_{LM}} 
        \defeq
        \begin{bmatrix}
        p(y_t=0),~~p(y_t=1)
        \end{bmatrix}
        =
        \begin{bmatrix}
        p_{0},   ~~
        p_{1}
        \end{bmatrix}
        \label{Equ:prior}
	\end{align}
Note that $p_0,p_1 \geq 0$ are constants with $p_0 + p_1 = 0$. 

\textbf{Step 2: Classifier output statistics.} The classifier outputs from Eqn.(\ref{Equ:logLinearModel}) directly determine the probability of data being assigned with each label. The Unigram (or histogram) statistics can be easily calculated by
\begin{align}
		\mathbf{\overline{P_{LM}}} (\theta) =
        \begin{bmatrix}
        \frac{1}{T}\sum_{t=1}^T p_{\theta,t}(0),~~\frac{1}{T} \sum_{t=1}^T p_{\theta,t}(1)
        \end{bmatrix}^T
        \label{Equ:classifierstat}
	\end{align}
where $\mathbf{\overline{P_{LM}}} (\theta)$ stands for the estimated language model from classifier which is parameterized by $\theta$.

\textbf{Step 3: Matching the two statistics.}  In the same spirit of decryption of Caesar Cipher, we need to compare and then match the two statistics $\mathbf{P_{LM}} $ and $\mathbf{\overline{P_{LM}}} (\theta)$ in order to find the best decipher. As suggested in \citep{Chen2016}, the difference between two distributions can be measured by Kullback-Leibler (KL) divergence. The KL divergence of the true prior distribution $\mathbf{{P_{LM}}}$ and the estimated distribution $\mathbf{\overline{P_{LM}}}$ is:
\begin{align}
	D_{KL}(\mathbf{{P_{LM}}}||\mathbf{P_{LM}})
	=
    \mathbf{P_{LM}} \ln{{\mathbf{P_{LM}}}^{T} \oslash {\mathbf{\overline{P_{LM}}(\theta)}} }
    =
    -\mathbf{P_{LM}}  \ln{\mathbf{\overline{P_{LM}}} (\theta)} + \mathbf{P_{LM}}  \ln{\mathbf{{P_{LM}}^{T}}} 
    \label{Equ:KLDivergence}
	\end{align}
where the sign $\oslash$ denotes element-wise division. We minimize $D_{KL}$ in order to learn the model parameters $\theta$ to the effect that the estimated statistics become as close to the prior statistics as possible. Note that in $D_{KL}$ of Eqn. (\ref{Equ:KLDivergence}), the second term $\mathbf{P_{LM}},  \ln{\mathbf{{P_{LM}}^{T}}}, $ is a constant and can be ignored in optimization. Thus, our objective function to be minimized becomes:
	\begin{align}
	\mc{J}(\theta)
	=
    -\mathbf{P_{LM}}  \ln{\mathbf{\overline{P_{LM}}} (\theta)}
    =
	  -p_{0} 	\ln{\frac{1}{T} \sum_{t=1}^{T}{p_{t,\theta}(0)}}  -  p_{1} 	\ln{\frac{1}{T} \sum_{t=1}^{T}{p_{t,\theta}(1)}}  
    \label{Equ:CostUnigram}
	\end{align}
    
In summary, we formulate our problem of unsupervised learning in this pedagogical example as an optimization problem of finding the best classifier parameters $\theta^*$ according to
\begin{align}
	\theta^* = \arg \min_\theta \mc{J}(\theta)
    \label{Equ:minCost}
	\end{align}

\subsection{Issues of SGD-based Optimization}

However, for the objective function of Eqn.(\ref{Equ:CostUnigram}), it is not feasible to use SGD-based optimizer to solve the above minimization problem above. This is due to two reasons: the high cost of computing the gradient and the highly non-convex profile of the objective function with respect to the parameters.

\textbf{High cost of computing the gradient.} The SGD-based optimizers, such as AdaGrad \citep{AdaGrad} and ADAM \citep{Kingma2014}, require that the sum over training samples be outside of the logarithmic loss \citep{Maximum}. However, in the above loss function, the sum over samples $t$, i.e. $\frac{1}{T} \sum_{t=1}^{T}$, is inside the logarithmic loss ($\ln$). This makes a very large difference from traditional neural network error minimization problems solved by SGD. The special form of the objective function in Eqn. (\ref{Equ:CostUnigram}) prevents us from applying SGD to minimize the cost.

\textbf{Highly non-convex profile.} Even worse, the cost function is highly non-convex, and hard to converge to the optimum. Figure \ref{Fig:Surface2DUnigram} illustrates this non-convex surface of the cost function $\mc{J}(\theta)$ in in Eqn. (\ref{Equ:CostUnigram}). To obtain this profile surface, we take two steps. First, we use both data $\mc{X}$ and true labels $\mc{Y}$ to find the global optimum $\theta^0=(w^{a0},w^{b0})$ (the red points in the figures) by solving the \textbf{supervised} learning problem
$$
	\theta^0 = \arg\max_\theta \sum_{t=1}^{T} \ln{p_{t,\theta}(y_t|x_t)}.
    \label{Equ:Supervise}
$$
This is an ordinary maximum posterior optimization problem with the sum over training sameples lying outside of the logarithm. Hence we solve it by plain SGD and plot with red dot in Figure \ref{Fig:Surface2DUnigram}. In the second step, we plot the two-dimensional function around $w^{a0},w^{b0}$, i.e. $\mc{J}(w^{a0} + \lambda_1, w^{b0} + \lambda_2)$ with respect to $\lambda_1, \lambda_2 \in \mathbb{R}$. We can observe the surface of the cost function $\mc{J}(\theta)$ and find that it is highly non-convex. There are many local optimal solutions and there is a high barrier that makes it difficult for naive optimization algorithms to reach the global optimum.

To overcome the above two difficulties, we can transform the optimization problem for the cost function in Eqn. (\ref{Equ:CostUnigram}) into an equivalent one, called \emph{saddle point problem}, where the cost function has a more desirable profile surface for gradient-based methods. One such technique is introduced in Section IV next.

\begin{figure*}[t!]
	\centering
	\begin{subfigure}[t]{0.3\textwidth}
		\includegraphics[width=\textwidth]{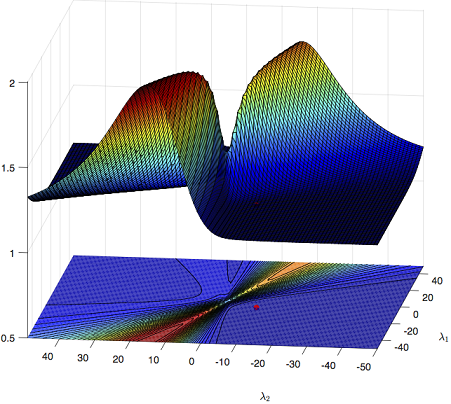}
		\caption{}
	\end{subfigure}	
	\quad
	\begin{subfigure}[t]{0.3\textwidth}
		\includegraphics[width=\textwidth]{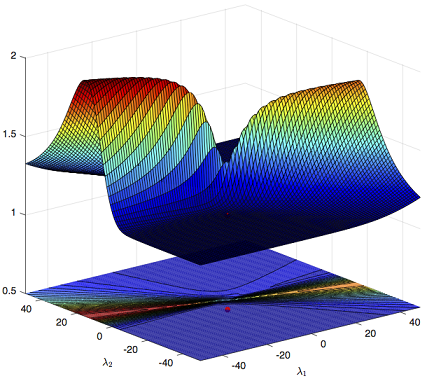}
		\caption{}
	\end{subfigure}
	\quad	
	\begin{subfigure}[t]{0.3\textwidth}
		\includegraphics[width=\textwidth]{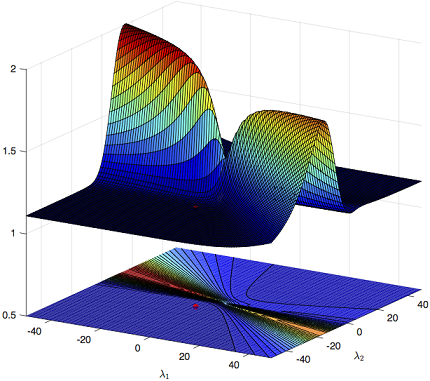}
		\caption{}
	\end{subfigure}	
	\quad
	\begin{subfigure}[t]{0.3\textwidth}
		\includegraphics[width=\textwidth]{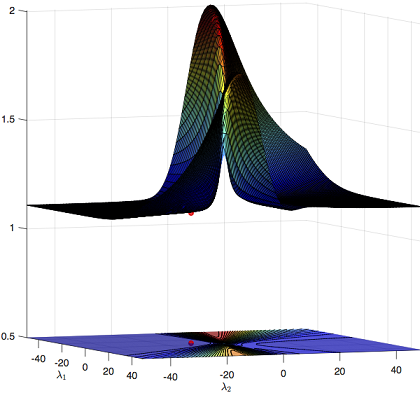}
		\caption{}
	\end{subfigure}	
	\quad
	\begin{subfigure}[t]{0.3\textwidth}
		\includegraphics[width=\textwidth]{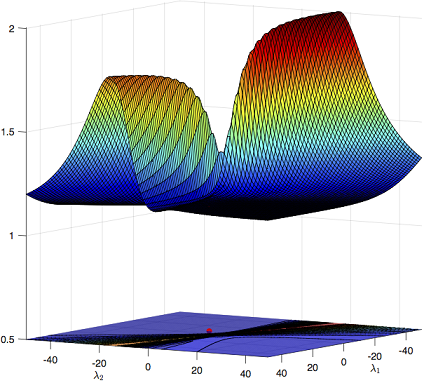}
		\caption{}
	\end{subfigure}
	\quad	
	\begin{subfigure}[t]{0.3\textwidth}
		\includegraphics[width=\textwidth]{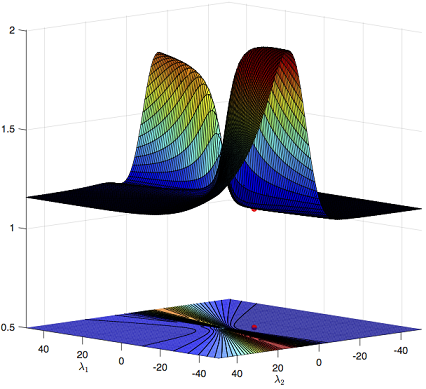}
		\caption{}
	\end{subfigure}	
	\caption{$\mc{J}(\theta)$ profile in the primal domain. The six sub-figures show the same profile from six different angles, spinning clock-wise from (a)-(f). The red dots indicate the global minimum.
	}
	\label{Fig:Surface2DUnigram}
\end{figure*}

\section{An Equivalent Saddle Point Problem \& Its Solution}
\label{SECTION:AnEquivalent}
\subsection{Problem Formulation}

Let us first introduce a lemma that will be used in the later description of the saddle point problem.

  \begin{lemma}
  \textbf{For any scaler $u>0$, the equation below holds:}
  \begin{align*}
          -\ln{u}
          =
          \max_{v<0}{(uv+\ln{(-v)})}.
      \end{align*}
      
      \label{Equ:lemmaConj}
  \end{lemma}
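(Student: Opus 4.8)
The plan is to recognize this identity as a Legendre--Fenchel (convex conjugate) statement: it asserts that $-\ln u$ is recovered as a pointwise supremum over $v<0$ of the affine-in-$u$ maps $u\mapsto uv+\ln(-v)$, which is exactly the claim that $-\ln$ is a closed convex function equal to the support of its affine minorants. Concretely, I would fix $u>0$, define $g(v)\defeq uv+\ln(-v)$ on the open half-line $(-\infty,0)$, show that $g$ attains a unique interior maximum, and evaluate that maximum in closed form. This reduces the lemma to a one-variable optimization together with a feasibility check.

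First I would settle concavity and the first-order condition. Differentiating gives $g'(v)=u+\tfrac{1}{v}$ and $g''(v)=-\tfrac{1}{v^2}<0$, so $g$ is strictly concave on $(-\infty,0)$ and any stationary point is the unique global maximizer. Solving $g'(v)=0$ yields $v^\star=-\tfrac{1}{u}$, which is admissible precisely because $u>0$ forces $v^\star<0$; verifying this feasibility is small but essential. Substituting back is where I would be most careful: $uv^\star=-1$ and $\ln(-v^\star)=\ln(1/u)=-\ln u$, so $g(v^\star)=-1-\ln u$. The optimization is thus routine, but the bookkeeping of the additive constant is not automatic, and I would flag that the bare maximum equals $-1-\ln u$; matching the stated right-hand side requires carrying along (or reconciling) this additive $1$.

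As an alternative that proves the inequality for \emph{every} $v<0$ and simultaneously isolates the equality case, I would invoke the elementary bound $\ln t\le t-1$, valid for all $t>0$ with equality iff $t=1$. Setting $t\defeq -uv>0$ gives $uv=-t$ and $\ln(-v)=\ln t-\ln u$, hence $g(v)=-t+\ln t-\ln u\le -1-\ln u$, with equality exactly when $t=1$, i.e. $v=-1/u$. This delivers both directions at once: the inequality certifies that the supremum is not exceeded, and the equality case certifies attainment at $v^\star$, so the supremum is in fact a maximum.

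I expect the only genuine obstacle to be the constant. Concavity, the location of $v^\star$, and its feasibility are all immediate, but a careful evaluation lands on $-1-\ln u$ rather than on $-\ln u$, so the decisive step is reconciling that additive $1$ (either by absorbing it into the definition of the dual variable or by recording it explicitly), after which the claimed equality follows directly from the stationarity analysis or from the $\ln t\le t-1$ argument.
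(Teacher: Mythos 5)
Your proposal is correct, and it is in fact more careful than the paper's own argument. The paper proves the lemma by invoking Legendre--Fenchel duality abstractly: it computes the conjugate of $f(u)=-\ln u$ as $f^*(v)=-1-\ln(-v)$ for $v<0$ (the paper misprints this as $-1-\ln v$) and then appeals to $f(u)=\sup_v\left(uv-f^*(v)\right)$. Your primary route --- direct one-variable calculus on $g(v)=uv+\ln(-v)$, establishing strict concavity via $g''(v)=-1/v^2<0$, locating the unique stationary point $v^\star=-1/u$, and checking its feasibility --- is more elementary and self-contained, and your second argument via $\ln t\le t-1$ with $t=-uv$ is cleaner still, since it proves the upper bound for every feasible $v$ and identifies the equality case in one stroke without any appeal to conjugate duality.

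The additive constant you flag is a genuine discrepancy, not a bookkeeping artifact on your side. Substituting the paper's own $f^*(v)=-1-\ln(-v)$ into $f(u)=\sup_{v<0}\left(uv-f^*(v)\right)$ gives
\begin{align*}
-\ln u \;=\; \max_{v<0}\left(uv+\ln(-v)\right)+1,
\end{align*}
so the maximum of $uv+\ln(-v)$ over $v<0$ equals $-1-\ln u$, exactly as you computed (e.g.\ at $u=1$ the right-hand side of the stated lemma evaluates to $-1$, not $0$). The paper silently drops the $+1$ in its final step. This does not affect anything downstream --- the constant shifts $\mc{L}(\theta,\mathbf{V})$ uniformly and changes neither the saddle point nor any gradient --- but the identity as literally printed is false, and your stationarity analysis, the feasibility check $v^\star<0$, and the $\ln t\le t-1$ bound together constitute a complete and correct proof of the corrected statement.
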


\begin{proof}
\vspace{2mm}
\itshape
We use convex conjugate functions to prove this equation. For a given convex function $f(u)$, its convex conjugate function  $f^* (v)$ is defined in \citep{Convex} as:
	\begin{align}
    	f^* (v)
        \defeq
        \sup_u{(v^Tu-f(u))}.
        \label{Equ:conjugate}
    \end{align}
Furthermore, the convex function $f(u)$ can be related to its convex conjugate function $f^* (v)$ as following relation:
	\begin{align}
    	f (u)
        =
        \sup_v{(u^T v-f^*(v))}.
        \label{Equ:relationToConjugate}
    \end{align}
We now consider a convex function $f(u)=-\ln{u}$ where $u$ is a scalar and $u>0$. From Eqn. (\ref{Eqn:conjugate}), the conjugate function for $f(u)$ can be obtained:
	\begin{align*}
    	f^* (v)|_{f(u)=-\ln{u}}
        =
        \sup_u{(uv+\ln{u})}
        =
        \max_u{(uv+\ln{u})}.
    \end{align*}
Let ${\partial{(uv+\ln{u})}}/{\partial{u}}=v+{1}/{u}=0$ and we can obtain the maximum point for $u$ is at $u=-{1}/{v}$. Substituting it in above formula, we can solve to get
	\begin{align}
    	f^* (v)|_{f(u)=-\ln{u}}
        =
        (uv+\ln{u})|_{u=-\frac{1}{v}}
        =
        -1-\ln{v}
        \label{Equ:conjugateOfLn}
    \end{align}
with $v<0$ is a scaler. We then replace the $f^* (v)$ in the right side of Formula (\ref{Equ:relationToConjugate}) with Formula (\ref{Equ:conjugateOfLn}), and replace the $f(u)$ in left side with $f(u)=-\ln{u}$. Finally, for any scaler $u>0$, we obtan:
	\begin{align*}
    	-\ln{u}
        =
        \max_{v<0}{(uv+\ln{(-v)})}
    \end{align*} 
\vspace{2mm}
\end{proof}

Examining Lemma \ref{Equ:lemmaConj}, we find $\ln{u}$ on the left side of the equation is transformed into an maximum over a different variable $v$ of a formula where $u$ comes outside the logarithm. Thus, we use this Lemma to rewrite the logarithm part in the cost function in Eqn. (\ref{Equ:CostUnigram}):
	\begin{align}
	\min_{\theta}&{\mc{J}(\theta)} 
	=
	\min_{\theta} {\left \{ -p_{0} 	\ln{\frac{1}{T} \sum_{t=1}^{T}{p_{t,\theta}(0)}}  -  p_{1} 	\ln{\frac{1}{T} \sum_{t=1}^{T}{p_{t,\theta}(1)}}  \right \}}
    \nn\\
    =&
    \min_{\theta}\Bigg \{  p_{0} 	\max_{v_{0}<0}{\left [  v_{0} \frac{1}{T} \sum_{t=1}^{T} p_{t,\theta}(0)+\ln{(-v_{0})}  \right ]}  
    + p_{1}  \max_{v_{1}<0}{\left [  v_{1} \frac{1}{T} \sum_{t=1}^{T} p_{t,\theta}(1)+\ln{(-v_{1})}  \right ]} 
\Bigg \} 
    \nn\\
    =&
    \min_{\theta} \max_{v_{0},v_{1}<0}  \Bigg \{  p_{0} v_{0} \frac{1}{T} \sum_{t=1}^{T} p_{t,\theta}(0) + p_{0} \ln{(-v_{0})} 
    + p_{1} v_{1} \frac{1}{T} \sum_{t=1}^{T} p_{t,\theta}(1) + p_{1} \ln{(-v_{1})}
    \Bigg \}  
    \nn\\
    =& 
    \min_{\theta} \max_{v_{0},v_{1}<0 }  \Bigg \{  \frac{1}{T} \sum_{t=1}^{T} \left [ 
    p_{0} v_{0} p_{t,\theta}(0) + p_{1} v_{1} p_{t,\theta}(1) \right ] 
    + p_{0} \ln{(-v_{0})} + p_{1} \ln{(-v_{1})}  \Bigg \}
    \label{Equ:final}
	\end{align}

Let us now define the new cost function $\mc{L} (\theta,V)$ to be the quantity inside the $\min \max$ in Equation \ref{Equ:final} above:
\begin{align}
	{\mc{L}(\theta,V) }
    =
     \frac{1}{T} \sum_{t=1}^{T} \left [ 
    p_{0} v_{0} p_{t,\theta}(0) + p_{1} v_{1} p_{t,\theta}(1) \right ] 
    + p_{0} \ln{(-v_{0})} + p_{1} \ln{(-v_{1})} 
    \label{Equ:finalNew}
	\end{align}
where $V=\{ v_0, v_1 \}$. 
With the use of matrix form 
	$\mathbf{V}=
        \begin{bmatrix}
        v_{0}, v_{1}
        \end{bmatrix}
	$,
as well as Equations (\ref{Equ:prior}) and (\ref{Equ:classifierstat}), we rewrite the new cost function into a matrix form (where $\odot$ stands for pairwise multiplication between two matrices):
\begin{align}
	\min_{\theta}{ \max_{\mathbf{V}<\mathbf{0}}  {\mc{L}(\theta,\mathbf{V}) } }
    =
    \min_{\mathbf{\theta}} \max_{\mathbf{V}<\mathbf{0}}  \bigg \{ 
      [\mathbf{P_{LM}} \odot  \mathbf{V}]  \mathbf{\overline{P_{LM}}} (\theta)  
    +\mathbf{P_{LM}} \ln{(-\mathbf{V})}^T 
    \bigg \}  
    \label{Equ:finalMatrix}
	\end{align}
    
Let us provide interpretation for the new cost function $\mc{L}(\theta,\mathbf{V})$ in Eqn. (\ref{Equ:finalMatrix}), contrasting the earlier form of $\mc{J}(\theta)$ in Eqn. (\ref{Equ:CostUnigram}). In the new form, parameter $\mathbf{V}$ is called \emph{dual} variables, and the original parameters $\mathbf{\theta}$ are called \emph{primal} variables. Minimization of $\mc{J}(\theta)$ over primal variables $\theta$ has been transformed to a min-max problem on $\mc{L}(\theta,\mathbf{V})$ over both primal variables $\theta$ and dual variables $\mathbf{V}$. The min-max problem in this new form is called \emph{saddle-point problem}. Specifically, the saddle-point problem can be solved by seeking a point, which is not only the minimum point along primal domain ($\theta$) but also the maximum point along dual domain ($\mathbf{V}$). This point, denoted as $(\theta^*,\mathbf{V}^*)$, is called the \emph{saddle point} for the domain of $(\theta,\mathbf{V})$ [2]. Importantly, we find that the sum over $t$ has been taken outside of logarithm. As a result, we can now apply SGD or its variation in learning the model.

\subsection{Profile Surface of $\mc{L(\theta,\mathbf{V})}$}
We now draw the profile of cost function $\mc{L}(\theta,\mathbf{V})$ in Figure \ref{Fig:Saddle2D}. Like the profile surface of $\mc{J}(\theta)$ in Section IV-C, we need to first get $\theta^0$ by solving the supervised problem $\theta^0 = \arg\max_\theta \sum_{t=1}^{T} \ln{p_{t,\theta}(y_t|x_t)}$. Then, we inference $V^0$ by 
\begin{align}
	\mathbf{V^0}= -\frac{1}{\mathbf{\overline{P_{LM}}}^T (\theta^0)} 
    \label{Equ:MinimaUnigram}
	\end{align}
With the supervised optima $\theta^0, \mathbf{V^0}$, we can plot the 3-D cost function surface around this optima. We randomly choose two direction $(\theta^1-\theta^0)$ and $(\mathbf{V^1} - \mathbf{V^0})$, where $\theta^1$ and $\mathbf{V^1}$ are random vectors with same size of $\theta^0$ and $\mathbf{V^0}$. Then we plot on the 3-D space the point $(\lambda_p, \lambda_d, Z(\lambda_p, \lambda_d))$, with $Z(\lambda_p, \lambda_d) = \mc{J}(\theta^* + \lambda_p(\theta_1-\theta^0), \mathbf{V^0} + \lambda_d(\mathbf{V_1} - \mathbf{V^0}))$. The Figure \ref{Fig:Saddle2D} shows the plotting results. 

Comparing surface of original surface in Figure \ref{Fig:Surface2DUnigram} with surface of transformed cost function in Figure \ref{Fig:Saddle2D}, we can see that, after the primal-dual reformulation, the barrier in original cost function $\mc{J}(\theta)$ disappears. Thus the optimization become much easier on the new cost function.

\begin{figure*}[t!]
	\centering
	\begin{subfigure}[t]{0.3\textwidth}
		\includegraphics[width=\textwidth]{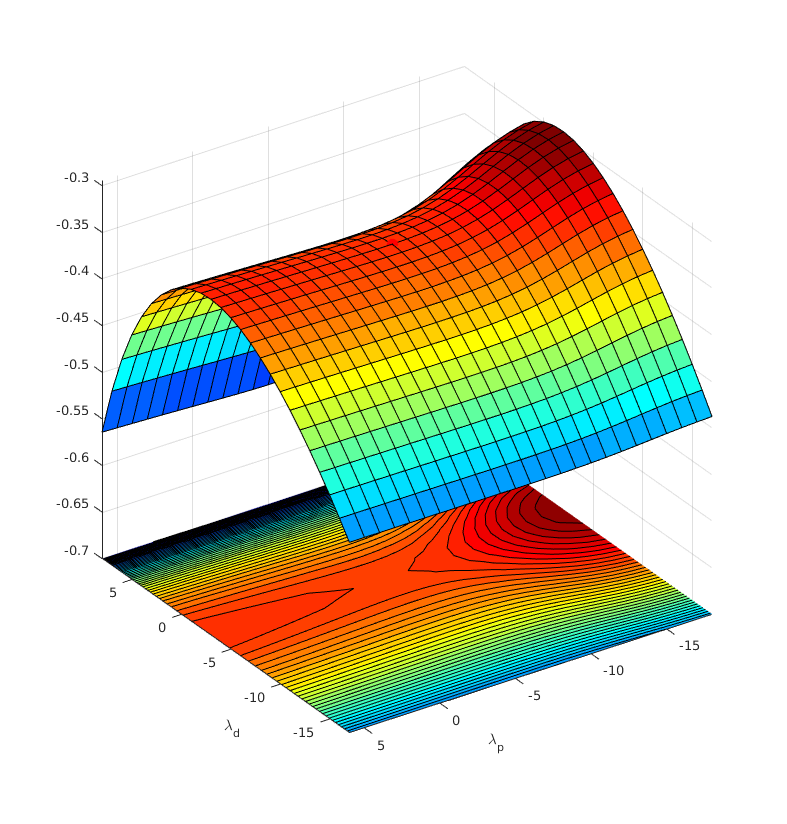}
		\caption{}
	\end{subfigure}	
	\quad
	\begin{subfigure}[t]{0.3\textwidth}
		\includegraphics[width=\textwidth]{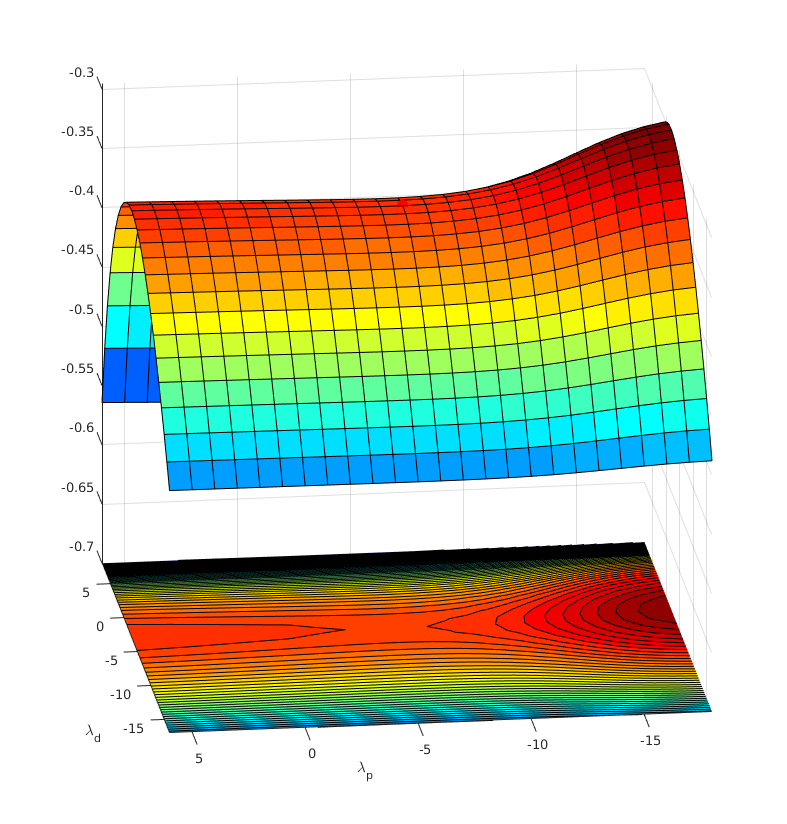}
		\caption{}
	\end{subfigure}
	\quad	
	\begin{subfigure}[t]{0.3\textwidth}
		\includegraphics[width=\textwidth]{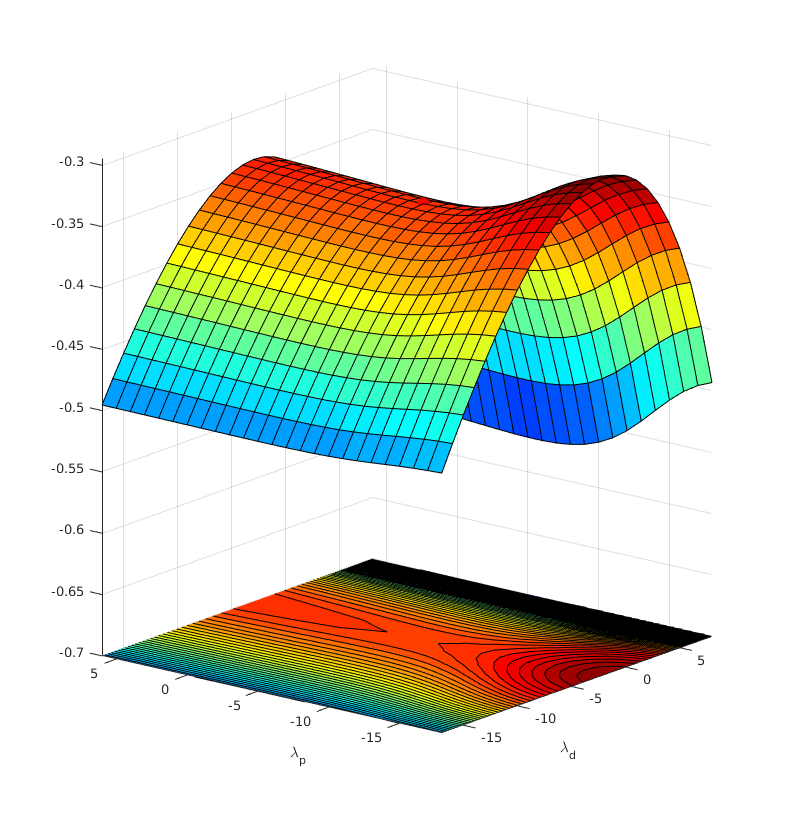}
		\caption{}
	\end{subfigure}	
	\quad
	\begin{subfigure}[t]{0.3\textwidth}
		\includegraphics[width=\textwidth]{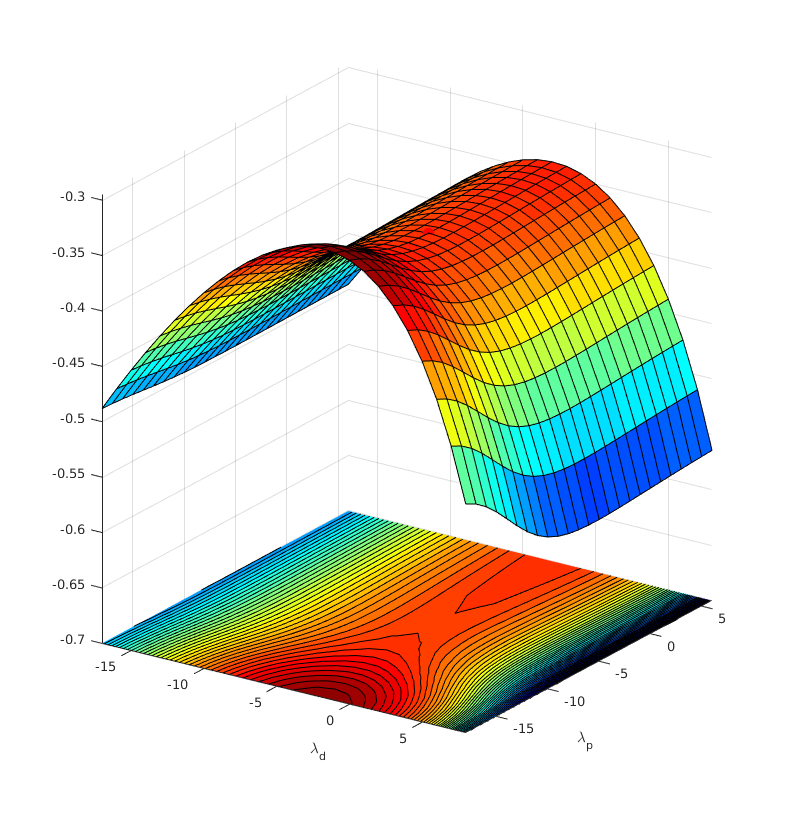}
		\caption{}
	\end{subfigure}	
	\quad
	\begin{subfigure}[t]{0.3\textwidth}
		\includegraphics[width=\textwidth]{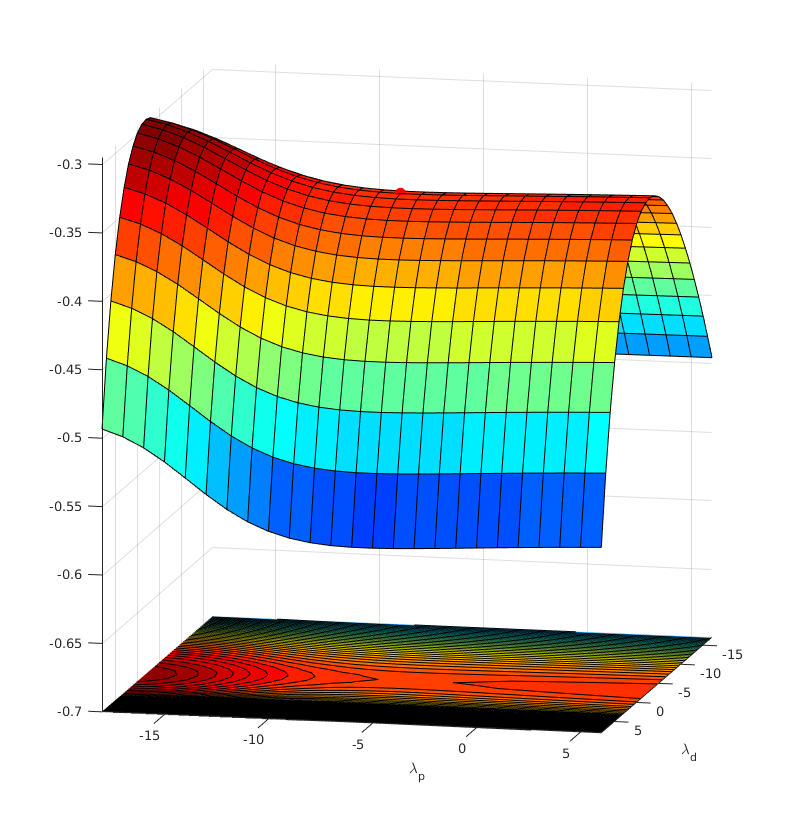}
		\caption{}
	\end{subfigure}
	\quad	
	\begin{subfigure}[t]{0.3\textwidth}
		\includegraphics[width=\textwidth]{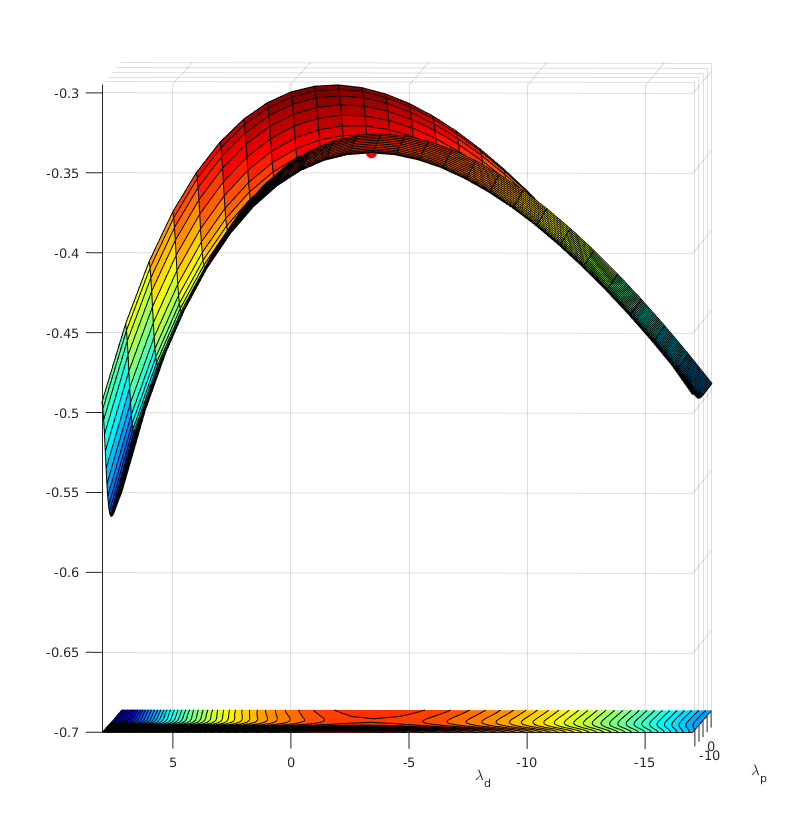}
		\caption{}
	\end{subfigure}	
	\caption{$\mc{L}(\theta, \mathbf{V})$ profile surface in the primal-dual domain of the Unigram case. The six sub-figures show the same profile from six different angles, spinning clock-wise from (a)-(f). The red dots indicate the saddle point (global optima). 
	}
	\label{Fig:Saddle2D}
\end{figure*}

\subsection{Solution with Primal-Dual Optimization}

In this section, we proceed to find the solution to the saddle-point problem in Eqn. (\ref{Equ:finalMatrix}).

To solve a min-max problem, our idea is to minimize $\mc{L}(\theta,\mathbf{V})$ with respect to $\theta$, while, at the same time, maximize $\mc{L}(\theta,\mathbf{V})$ with respect to $\mathbf{V}$. Specifically, we solve the min-max problem by stochastic gradient \emph{descent} of $\mc{L}(\theta,\mathbf{V})$ over $\theta$ while by stochastic \emph{accent} of $\mc{L}(\theta,\mathbf{V})$ over $\mathbf{V}$ with following update equations:
    \begin{align}
    	&\mathbf{\theta}_\tau 
		=
		\mathbf{\theta}_{\tau-1} -\mu_\theta \frac{\partial \mc{L}}{\partial \mathbf{\theta}} \bigg |_{\mathbf{\theta}=\mathbf{\theta}_{\tau-1}, \mathbf{V}=\mathbf{V}_{\tau-1} }
        \nn\\
        &\mathbf{V}_\tau 
		=
		\mathbf{V}_{\tau-1}  + \mu_V \frac{\partial \mc{L}}{\partial \mathbf{V}} \bigg |_{\mathbf{\theta}=\mathbf{\theta}_{\tau-1}, \mathbf{V}=\mathbf{V}_{\tau-1} }
        \label{Equ:update}
    \end{align}
The updates happens at time $\tau$. And $\mu_\theta, \mu_{V}>0$ are learning rate for primal variables and dual variables, respectively. Note that we update two of primal and dual variables in a synchronized manor. 

Now we derive the close form of $\frac{\partial \mc{L}}{\partial \mathbf{\theta}}$ and $\frac{\partial \mc{L}}{\partial \mathbf{V}}$. First, for $\frac{\partial \mc{L}}{\partial \mathbf{\theta}}$, from Eqn. (\ref{Equ:finalMatrix}), we take partial derivative: 
	\begin{align}
    \frac{\partial \mc{L}}{\partial \mathbf{\theta}}
    =
    \frac{\partial}{\partial \mathbf{\theta}}
    \bigg \{
    [\mathbf{P_{LM}} \odot  \mathbf{V}]  \mathbf{\overline{P_{LM}}} (\theta)  
    +\mathbf{P_{LM}} \ln{(-\mathbf{V})}^T  
    \bigg \}
    =
    [\mathbf{P_{LM}} \odot  \mathbf{V}] 
    \frac{\partial \mathbf{\overline{P_{LM}}} (\theta)}{\partial \mathbf{\theta}}
    \label{Equ:theta_dir}
    \end{align}
where the term $\frac{\partial \mathbf{\overline{P_{LM}}} (\theta)}{\partial \mathbf{\theta}}$ in above can be decomposed and calculated in element-wise. Recall that $\mathbf{P_{LM}}  = [p_0,p_1]$ is constant,
 $\mathbf{\overline{P_{LM}}} (\theta) =[\frac{1}{T}\sum_{t=1}^T p_{\theta,t}(0),~~\frac{1}{T} \sum_{t=1}^T p_{\theta,t}(1) ]^T$ and $\mathbf{\theta}=[w^a,w^b]^T$. From matrix derivative rules we know:
	\begin{align}
        \frac{\partial \mathbf{\overline{P_{LM}}} (\theta) }{\partial \mathbf{\theta}}
        =
        \frac{1}{T}\sum_{t=1}^T
        \begin{bmatrix}
        {\partial p_{t,\theta}(0)}/{\partial w^a} & {\partial p_{t,\theta}(0)}/{\partial w^b}  \\ 
        {\partial p_{t,\theta}(1)}/{\partial w^a} & {\partial p_{t,\theta}(1)}/{\partial w^b}
        \end{bmatrix}
    \label{Equ:Whole_dir}
    \end{align}
where $p_{t,\theta}(0)$ and $p_{t,\theta}(1)$ are defined in Eqn. (\ref{Equ:logLinearModel}) being log-linear classifier model for classes $0$ and $1$, respectively. The four derivatives in Eqn. (\ref{Equ:Whole_dir}) can be calculated from definition in Eqn. (\ref{Equ:logLinearModel}). For example: 
	\begin{align}
    \frac{\partial p_{t,\theta}(0)}{\partial w^a}
    &=
    \frac{\partial}{\partial w^a} \bigg \{
    \frac{ e^{\gamma w^a x^a_t} }{e^{\gamma w^a x^a_t} + e^{\gamma w^b x^b_t}} \bigg \}
    \nn\\
    &=
    \frac{\partial}{\partial w^a} \bigg \{
    \frac{ 1 }{1 + e^{ - (\gamma w^a x^a_t - \gamma w^b x^b_t) }} \bigg \}
    \nn\\
    &=
    \frac{\partial}{\partial w^a}
    \sigma(\gamma w^a x^a_t - \gamma w^b x^b_t) 
    \nn\\
    &=
    \sigma(\gamma w^a x^a_t - \gamma w^b x^b_t)(1-\sigma(\gamma w^a x^a_t - \gamma w^b x^b_t)) \frac{\partial (\gamma w^a x^a_t - \gamma w^b x^b_t)}{\partial w^a}
    \nn\\
    &=
    p_{t,\theta}(0) (1-p_{t,\theta}(0)) (\gamma x^a_t) 
    \nn\\
    &=
    \gamma p_{t,\theta}(0) p_{t,\theta}(1)  x^a_t
    \label{Equ:dir1}
    \end{align}
In above derivation, $\sigma(\cdot)$ indicate sigmoid function that $\sigma(a)=\frac{ 1 }{1 + e^{-a}}$. Note that we use the equation  $\frac{\partial \sigma(a)}{\partial a}= \sigma(a)(1-\sigma(a))$ in above derivation. Similarly, we can get all the derivatives in matrix of \ref{Equ:Whole_dir} as:
\begin{align}
	&\frac{\partial p_{t,\theta}(0)}{\partial w^a}
    =
    \gamma p_{t,\theta}(0) p_{t,\theta}(1)  (x^a_t)
    \nn\\
    &\frac{\partial p_{t,\theta}(0)}{\partial w^b}
    =
    \gamma p_{t,\theta}(0) p_{t,\theta}(1)  (-x^b_t)
    \nn\\
    &\frac{\partial p_{t,\theta}(1)}{\partial w^a}
    =
    \gamma p_{t,\theta}(0) p_{t,\theta}(1)  (-x^a_t)
    \nn\\
    &\frac{\partial p_{t,\theta}(1)}{\partial w^b}
    =
    \gamma p_{t,\theta}(0) p_{t,\theta}(1)  (x^b_t)
    \label{Equ:Element_dir}
    \end{align}
Then we substitute all derivative terms in Eqn. (\ref{Equ:Whole_dir}) using (\ref{Equ:Element_dir}):
    \begin{align}
    \frac{\partial \mathbf{\overline{P_{LM}}} (\theta) }{\partial \mathbf{\theta}}
        &= 
        \frac{1}{T}\sum_{t=1}^T \gamma p_{t,\theta}(0) p_{t,\theta}(1) \mathbf{X}
        \nn\\
        &where~~~
        \mathbf{X}
        =
        \begin{bmatrix}
        ~x_t^a & -x_t^b  \\ 
        -x_t^a & ~x_t^b
        \end{bmatrix}
    \label{Equ:PDiv}
	\end{align}
Thus we get the final close form of $\frac{\partial \mc{L}}{\partial \mathbf{\theta}}$:
	\begin{align}
        \frac{\partial \mc{L}}{\partial \mathbf{\theta}}
        =
		\frac{1}{T} \sum_{t=1}^{T} \bigg \{
\gamma p_{t-1,\theta}(0) p_{t-1,\theta}(1)   [\mathbf{P_{LM}} \odot \mathbf{V}] \mathbf{X} \bigg \}
        \label{Equ:finalDiv_theta}
     \end{align}
Then for $\frac{\partial \mc{L}}{\partial \mathbf{V}}$, recall that $\mathbf{V} = [v_0, v_1]$. Simply use matrix derivative rules and we can get:
     \begin{align}
        \frac{\partial \mc{L}}{\partial \mathbf{V}}
        &=
        \frac{\partial}{\partial \mathbf{V}}
    \bigg \{
    [\mathbf{P_{LM}} \odot  \mathbf{V}]  \mathbf{\overline{P_{LM}}} (\theta)  
    +\mathbf{P_{LM}} \ln{(-\mathbf{V})}^T  
    \bigg \}
    \nn\\
    	&=
        \mathbf{P_{LM}} \odot \mathbf{\overline{P_{LM}}}^T (\theta) + \frac{\mathbf{P_{LM}}}{\mathbf{V}}
        \label{Equ:finalDiv_l}
	\end{align}

For mini-batch updates, we simply replace the sum over total data $T$ with sum over the mini-batch data $B=(\mathbf{x_s},…,\mathbf{x_{s+N-1}})$. For each update, a sequence of data with size $N$ is randomly taken from $\mc{X}$ as a mini-batch. Then we use Formula (\ref{Equ:update}) to update both primal and dual variable with close form (\ref{Equ:finalDiv_theta}) and (\ref{Equ:finalDiv_l}). Repeating this process until convergence. We describe the this process with mini-batch gradient in Algorithm \ref{Alg:SPDG}.

\subsection{The Weakness of Unigram}

However, we found the language model of Unigram is too weak. It does not capture the sequence structure which is important information for unsupervised classification. As in \cite{Chen2016}, the sequence prior of output labels is a important structure to employ in unsupervised problems, especially when the order is high. However, the Unigram language model does not capture such structure.
 
However, the experiment results turns out to be unsatisfied. The average classification error rate is $30.8\%$, which is just about the error rate of majority guess. The model constantly predicts class with label 1, regardless of the input data. In experiment we conduct the experiments on the binary classification task with the dataset we created for $(\mc{X}, \mc{Y})$.    

 Next, we will walk through this problem by using Bigram language model, the simplest language model with sequence prior statistics. The main procedure of the derivation is the same with Unigram we just did in previous sections. So it can be easier for readers to apply to Bigram case, in spite of more mathematics will be involved.

\begin{algorithm}[tb]
	\renewcommand{\algorithmicrequire}{\textbf{Inputs:}}
	\renewcommand{\algorithmicensure}{\textbf{Outputs:}}
	\caption{Stochastic Primal-Dual Gradient Method}
	\label{Alg:SPDG}
	\begin{algorithmic}[1]	
		\STATE {\bf Input data:} $ \mc{X}=(\mathbf{x_1},\dots, \mathbf{x_T})$ and $\mathbf{P_{\LM}}$.
		\STATE Initialize $\theta$ and $\mathbf{V}$ from random numbers where the elements of $\mathbf{V}$ are negative
		\REPEAT
			\STATE Randomly sample a mini-batch of $B$ subsequences of length $N$, i.e.,
			$B=(\mathbf{x_s},\dots,\mathbf{x_{s+N-1}})$.
			\STATE Compute the stochastic gradients ${\partial \mc{L}}/{\partial \mathbf{\theta}}$ and ${\partial \mc{L}/}{\partial \mathbf{V}}$ for the subsequence in the mini-batch using Formula (\ref{Equ:finalDiv_theta}) and (\ref{Equ:finalDiv_l}).
			\STATE Update $\theta$ and $\mathbf{V}$ according to
			{\small
				\begin{align}
					\theta &\leftarrow \theta - \mu_{\theta} \frac{\partial \mc{L}}{\partial \mathbf{\theta}}, \qquad
					\mathbf{V} \leftarrow \mathbf{V} + \mu_V \frac{\partial \mc{L}}{\partial \mathbf{V}} \nn
				\end{align}			
			}
		\UNTIL{convergence or a certain stopping condition is met}
	\end{algorithmic}
\end{algorithm}

\section{Extension to the Case with Sequence Prior Statistics}
\label{SECTION:Extension}
\subsection{Problem Formulation for the Sequential Bigram Case}

In Bigram case, the problem formulation and approach is the same as Unigram case that we also use model for classifier as in formula \ref{Equ:logLinearModel}.  except using Bigram language models. Here below are the three main steps accordingly.

\textbf{Step 1: Prior language model statistics (Bigram)} We will create synthetic data with arbitrary given language model. First, we create output labels $\mc{Y} = (y_1,y_2,\dots,y_T)$ from a 1st-order Markov model that is based on a given transition probability:
	\begin{align}
		&p(y_1,y_2,\dots,y_T)
        =
					\prod_{t=1}^T p(y_t | y_{t-1})
		\label{Equ:1OrderMarkov}
	\end{align}
$p(y_t|y_{t-1})$ denotes conditional probability of the transition between adjacent labels $y_{t-1}$ to $y_t$, which is given in transition matrix $\mathbf{P_{Trans}}$. Then, the language model $\mathbf{P_{LM}}$ can be calculated from the transition matrix:
	\begin{align}
		&\mathbf{P_{LM}}
        \defeq
        \begin{bmatrix}
        p_{00} & p_{01} \\ 
        p_{10} & p_{11}
        \end{bmatrix}
        =diag(\mathbf{p_{st}}) \mathbf{P_{Trans}}
        \label{Equ:Bigram_PLM}
	\end{align}
$\mathbf{p_{st}}$ denotes the steady state vector of transition $\mathbf{P_{Trans}}$. The creation of data, as well as connection between language model and transition matrix will be explained in detail in experiment section VII.

\textbf{Step 2: Classifier output statistics (Bigram)} Due to first order dependency, the probability of data being assigned with each label by the classifier in Eqn. (\ref{Equ:logLinearModel}) can be calculated by:
\begin{align}
		\mathbf{\overline{P_{LM}}} (\theta) 
		&=
        \begin{bmatrix}
        \frac{1}{T}\sum_{t=1}^T p_{\theta,t}(0)p_{\theta,t-1}(0), ~~\frac{1}{T}\sum_{t=1}^T p_{\theta,t}(0)p_{\theta,t-1}(1)  \\ 
        \frac{1}{T}\sum_{t=1}^T p_{\theta,t}(1)p_{\theta,t-1}(0),~~ \frac{1}{T}\sum_{t=1}^T p_{\theta,t}(1)p_{\theta,t-1}(1)
        \end{bmatrix}
        \nn\\
        &=
        \frac{1}{T}\sum_{t=1}^T
        \begin{bmatrix}
        p_{\theta,t-1}(0) \\
        p_{\theta,t-1}(1)
        \end{bmatrix}
        \begin{bmatrix}
        p_{\theta,t}(0) \\
        p_{\theta,t}(1)
        \end{bmatrix}^T
        \nn\\
        &=
        \frac{1}{T}\sum_{t=1}^T \mathbf{P_{t-1,\theta}} \mathbf{P_{t,\theta}}^T
        \nn\\
        where ~~~~
        &\mathbf{P_{t,\theta}}
        =
        \begin{bmatrix}
        p_{t,\theta}(0) ~~
        p_{t,\theta}(1)
        \end{bmatrix}^T
        \label{Equ:classifierstat}
	\end{align}

\textbf{Step 3: Matching the two statistics} As with Unigram, we still use KL-divergence to measure the difference, and obtain cost function :
	\begin{align}
	\mc{J}(\theta)
	=
    -\left \langle \mathbf{P_{LM}} ~, ~\ln{\mathbf{\overline{P_{LM}}} (\theta)} \right \rangle
    =
	  -\sum_{i,j \in \{ 0,1\} } p_{ij} 	\ln{ \frac{1}{T}\sum_{t=1}^T [p_{t-1,\theta}(i)  p_{t,\theta}(j)]} 
    \label{Equ:Cost}
	\end{align}
where the notation $\left \langle \mathbf{A},\mathbf{B} \right \rangle=\sum_i \sum_j a_{ij}b_{ij}$ stands for the inner product of two matrices $\mathbf{A}$ and $\mathbf{B}$, specifically, sum of the products of the corresponding components of two matrices. Thus, we formulate our problem as an optimization problem finding the best classifier parameters $\theta^*$:
\begin{align}
	\theta^* = \arg \min_\theta \mc{J}(\theta)
    \label{Equ:minCostBigram}
	\end{align}

We again observe the sum inside logarithm. With using Lemma \ref{Equ:lemmaConj}, we can transform this problem to its equivalent saddle-point problem:
	\begin{align*}
	\min_{\theta}{\mc{J}(\theta)} 
    &=
    \min_{\theta}{\left \{ \sum_{i,j \in \{ 0,1\} }  p_{ij} 	\max_{v_{ij}<0}{\left [  v_{ij}\frac{1}{T}\sum_{t=1}^T [p_{t-1,\theta}(i)  p_{t,\theta}(j)]+\ln{(-v_{ij})} \right ]}  \right \}}
    \nn\\
    &=
    \min_{\theta}{ \max_{v_{ij}<0}{  \left \{ \sum_{i,j \in \{ 0,1\} }  p_{ij} v_{ij}\frac{1}{T}\sum_{t=1}^T [p_{t-1,\theta}(i)  p_{t,\theta}(j)]+ p_{ij} \ln{(-v_{ij})}  \right \} } }
    \nn\\
    &=
    \min_{\theta} \max_{v_{ij}<0}  \Bigg\{ \frac{1}{T}  \sum_{t=1}^{T}  \sum_{i,j \in \{ 0,1\} } p_{ij} v_{ij} p_{t-1,\theta}(i) p_{t,\theta}(j)
    + \sum_{i,j \in \{ 0,1\}} p_{ij} \ln{(-v_{ij})}  \Bigg\} 
    \nn\\
    &=
    \min_{\mathbf{\theta}} \max_{\mathbf{V}<0}  \bigg \{ 
    \frac{1}{T} \sum_{t=1}^{T} (\mathbf{p_{t-1,\theta}^T}   [\mathbf{P_{LM}} \odot  \mathbf{V}]  \mathbf{p_{t,\theta}} )   
    +\left \langle \mathbf{P_{LM}}, \ln{(-\mathbf{V})} \right \rangle
    \bigg \}
    \nn\\
   \end{align*}
As in the case of Unigram, we define the new cost:
  \begin{align}
    \mc{L}(\theta,\mathbf{V}) 
    &= \frac{1}{T} \sum_{t=1}^{T} (\mathbf{p_{t-1,\theta}^T}   [\mathbf{P_{LM}} \odot  \mathbf{V}]  \mathbf{p_{t,\theta}} )   
    +\left \langle \mathbf{P_{LM}}, \ln{(-\mathbf{V})} \right \rangle
    \nn\\
    \mathbf{V}
    &=
    \begin{bmatrix}
    v_{00} ~~ v_{01}  \\ 
    v_{10} ~~ v_{11}
    \end{bmatrix}
    ~~~~
    \mathbf{P_{t,\theta}}
    =
    \begin{bmatrix}
    p_{t,\theta}(0)  \\ 
    p_{t,\theta}(1)
    \end{bmatrix}
    ~~~~
    \mathbf{P_{LM}}
    =
    \begin{bmatrix}
    p_{00} ~~ p_{01}  \\ 
    p_{10} ~~ p_{11}
    \end{bmatrix}
    ~~~~
    \mathbf{\theta}
    =
    \begin{bmatrix}
    w^a  \\ 
    w^b
    \end{bmatrix}
    \label{Equ:finalMat}
	\end{align}
$\theta$ is called \emph{primal} variables, and $\mathbf{V}$ is called \emph{dual} variables. Thus, the minimization problem of $\mc{J}(\theta)$ over primal variables $\theta$ is transformed to an equivalent saddle-point problem of $\mc{L}(\theta,\mathbf{V})$, and the sum over $t$ has been taken outside of logarithm. In other words, the optimal solution $(\theta^*,\mathbf{V}^*)$ to $\mc{L}(\theta,\mathbf{V})$ is called the saddle point [2], which is relative minimum along primal domain $\theta$ while is a relative maximum along dual domain $\mathbf{V}$. 

\subsection{Profile Surface of $\mc{L}(\theta,\mathbf{V})$}

As in the Unigram case, we also visualize the profile surface of cost function $\mc{L}(\theta,\mathbf{V})$ in Figure \ref{Fig:Saddle2DBigram}. Again, we need to first get $\theta^0$ by solving the supervised problem $\theta^0 = \arg\max_\theta \sum_{t=1}^{T} \ln{p_{t,\theta}(y_t|x_t)}$, and inference $V^0$ using 
\begin{align}
	\mathbf{V^0}= -\frac{T}{\sum_{t=1}^T \mathbf{P_{t-1,\theta^0}} \mathbf{P_{t,\theta^0}^T}}
    \label{Equ:MinimaBigram}
	\end{align}
Using the same method, we first randomly choose two direction $(\theta^1-\theta^0)$ and $(\mathbf{V^1} - \mathbf{V^0})$, where $\theta^1$ and $\mathbf{V^1}$ are random vectors with same size of $\theta^0$ and $\mathbf{V^0}$. Then we plot on the 3-D space the point $(\lambda_p, \lambda_d, Z(\lambda_p, \lambda_d))$, with $Z(\lambda_p, \lambda_d) = \mc{J}(\theta^* + \lambda_p(\theta_1-\theta^0), \mathbf{V^0} + \lambda_d(\mathbf{V_1} - \mathbf{V^0}))$. The Figure \ref{Fig:Saddle2DBigram} shows the plotting results.

\begin{figure*}[t!]
	\centering
	\begin{subfigure}[t]{0.28\textwidth}
		\includegraphics[width=\textwidth]{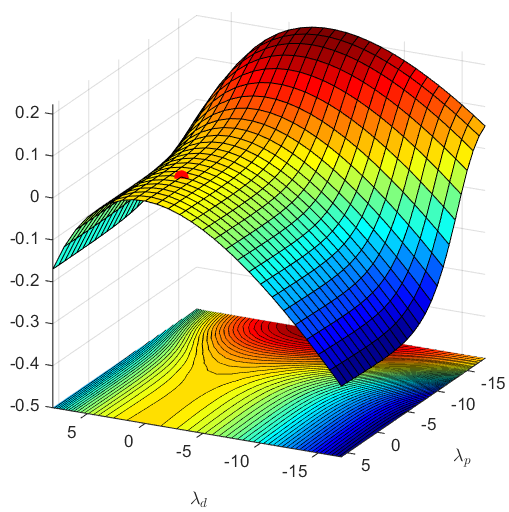}
		\caption{}
	\end{subfigure}	
	\quad
	\begin{subfigure}[t]{0.28\textwidth}
		\includegraphics[width=\textwidth]{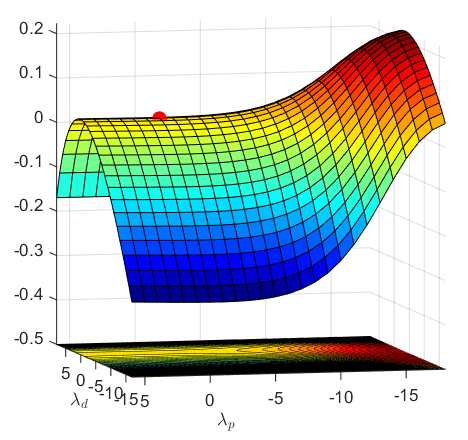}
		\caption{}
	\end{subfigure}
	\quad	
	\begin{subfigure}[t]{0.28\textwidth}
		\includegraphics[width=\textwidth]{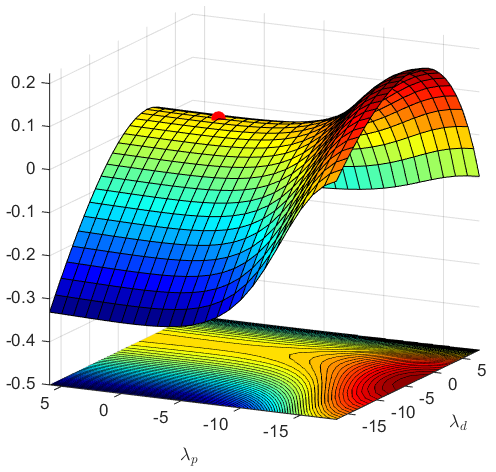}
		\caption{}
	\end{subfigure}	
	\quad
	\begin{subfigure}[t]{0.28\textwidth}
		\includegraphics[width=\textwidth]{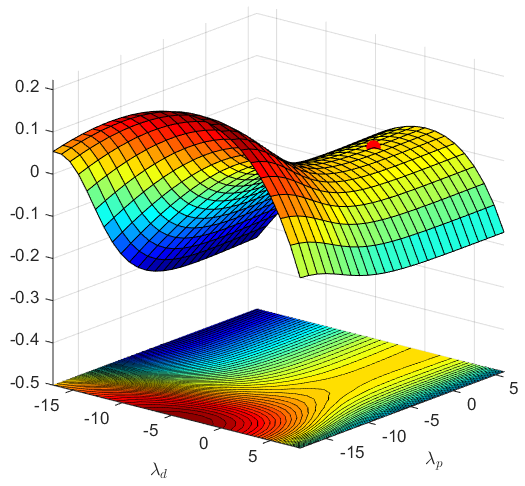}
		\caption{}
	\end{subfigure}	
	\quad
	\begin{subfigure}[t]{0.28\textwidth}
		\includegraphics[width=\textwidth]{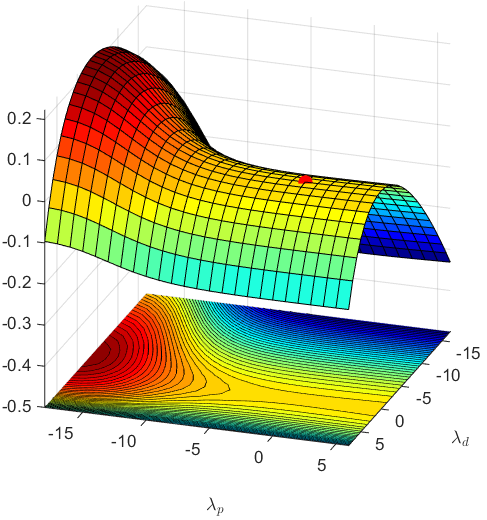}
		\caption{}
	\end{subfigure}
	\quad	
	\begin{subfigure}[t]{0.28\textwidth}
		\includegraphics[width=\textwidth]{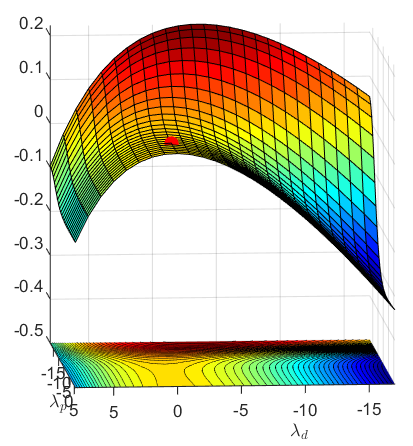}
		\caption{}
	\end{subfigure}	
	\caption{$\mc{L}(\theta, \mathbf{V})$ profile in the primal-dual domain of the Bigram case. The six sub-figures show the same profile from six different angles, spinning clock-wise from (a)-(f). The red dots indicate the saddle point (global optima). 
	}
	\label{Fig:Saddle2DBigram}
\end{figure*}

\subsection{Primal-Dual Optimization for Bigram Case}

Similarly, we solve the min-max problem by stochastic gradient \emph{descent} of $\mc{L}(\theta,V)$ over $\theta$ while by stochastic \emph{accent} of $\mc{L}(\theta,V)$ over $V$ with the same update equations:
    \begin{align}
    	&\mathbf{\theta}_\tau 
		=
		\mathbf{\theta}_{\tau-1} -\mu_\theta \frac{\partial \mc{L}}{\partial \mathbf{\theta}} \bigg |_{\mathbf{\theta}=\mathbf{\theta}_{\tau-1}, \mathbf{V}=\mathbf{V}_{\tau-1} }
        \nn\\
        &\mathbf{V}_\tau 
		=
		\mathbf{V}_{\tau-1}  + \mu_V \frac{\partial \mc{L}}{\partial \mathbf{V}} \bigg |_{\mathbf{\theta}=\mathbf{\theta}_{\tau-1}, \mathbf{V}=\mathbf{V}_{\tau-1} }
        \label{Equ:updateBigram}
    \end{align}
The updates happens at time $\tau$. And $\mu_\theta, \mu_{V}>0$ are learning rate for primal variables and dual variables, respectively. Note that we update two of primal and dual variables in a synchronized manor. 

By using Formula (\ref{Equ:PDiv}), we calculate ${\partial \mc{L}}/{\partial \mathbf{\theta}}$ and ${\partial \mc{L}/}{\partial \mathbf{V}}$ similar as the Unigram Case:
		\begin{align}
        &\frac{\partial \mc{L}}{\partial \mathbf{\theta}}
        =
		\frac{1}{T} \sum_{t=1}^{T} \bigg \{
		\gamma P_{t-1,\theta}(0) P_{t-1,\theta}(1) 
		(
	[ \mathbf{X_{t-1}^T}  (\mathbf{P_{LM}} \odot \mathbf{V}) \mathbf{P_{t,\theta}}
        + (\mathbf{P_{LM}} \odot \mathbf{V})^T    \mathbf{X_{t}^T} \mathbf{P_{t-1,\theta}} ]
        \bigg \}
        \nn\\
        &\frac{\partial \mc{L}}{\partial \mathbf{V}}
        =
        \frac{1}{T} \sum_{t=1}^{T} \bigg \{
        \mathbf{P_{LM}} \cdot (\mathbf{P_{t-1,\theta} P_{t,\theta}^T}) + \frac{\mathbf{P_{LM}}}{\mathbf{V}}
        \bigg \}
        \nn\\
        &~~~~~~~~~~~~~~~~~~~~~~~~~~~~~~~~~~~~~~~~~~~~~~~~~where~~
        \mathbf{X}
        =
        \begin{bmatrix}
        x_t^a & -x_t^b  \\ 
        -x_t^a & x_t^b
        \end{bmatrix}
        \label{Equ:finalUpdateBigram}
	\end{align}

As Unigram, we also use mini-batch to calculate each update during training in the Bigram case. It is the same as the Unigram case with mini-batch gradient in Algorithm \ref{Alg:SPDG}.

\section{Experiments}
\label{SECTION:Experiment}

\subsection{The Case of Unigram}

\begin{figure}[b!]
		\centering
   		\includegraphics[width = .8\linewidth]{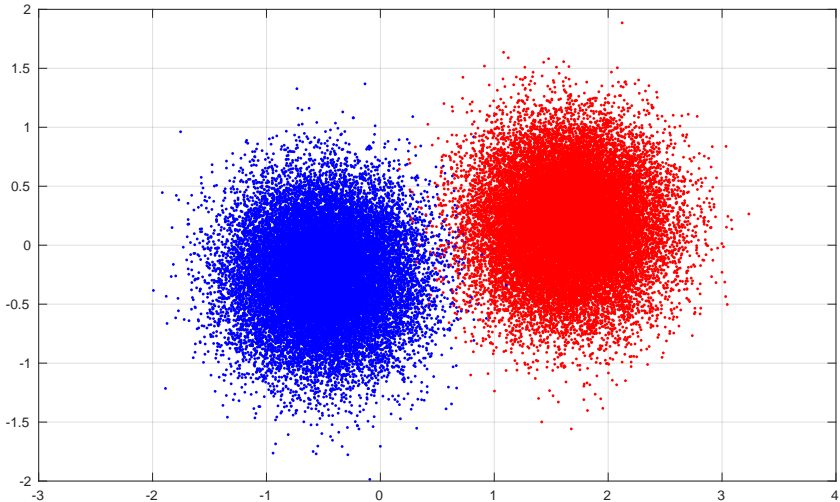}
   		\caption{Plot of data points of two classes.}
   		\label{fig:inputVisualization}
	\end{figure}

\textbf{Data Generation} We first create synthetic data with Unigram language model, where the data points and associated labels subject to a given Unigram language model, such as:
	\begin{align}
		&\mathbf{P_{LM}}
        =
        \begin{bmatrix}
        p_{0}   ~~
        p_{1}
        \end{bmatrix}
        =
        \begin{bmatrix}
        0.692   ~~
        0.308
        \end{bmatrix}
		\label{Equ:PLM_uni}
	\end{align}
That is 69.2\% of labels ``0'' and 30.8\% of label ``1'' in $\mc{Y} = (y_1,\dots,y_T)$.

Given the corresponding label of $y_t$, we can create the input points on a 2-dimensional grid $\mc{X} = (\mathbf{x_1} \dots \mathbf{x_T})$ by sampling over independent following Gaussian distributions:
	\begin{align*}
		&p(\mathbf{x_t} | y_t=0) \sim N(\mathbf{\mu_0} ; \mathbf{\Sigma})
        \nn\\
        &p(\mathbf{x_t} | y_t=1) \sim N(\mathbf{\mu_1} ; \mathbf{\Sigma})
	\end{align*}
where the parameters $\mathbf{\mu_0,\mu_1} \in \mathbb{R}^{2 \times 1}$ are centers of two classes data points, and $\mathbf{\Sigma}$ denotes the covariance matrix. In experiment, the central points $\mathbf{\mu_0}$ and $\mathbf{\mu_1}$ are randomly generated, for example, $\mathbf{\mu_0}=[-0.504,-0.264]^T$ and $\mathbf{\mu_1}=[1.646,0.181]^T$. Particularly, $\mathbf{\Sigma}$ is manually chosen to make sure the data points of two classes slightly ``blend'' into each other, because we need to insure that an unique optimal solution can be reached. So we set $\mathbf{\Sigma}=0.4 \cdot \mathbf{I}$. Following above settings, we sample total size of $T=60,000$ data points, which is then split to 50,000 for training, 5,000 for testing and 5,000 for validation. Figure \ref{fig:inputVisualization} shows examples of the plot of data points $\mc{X}$, with blue points being class ``0'' and red ones being class ``1''.

\textbf{Experimental Results} To validate the analysis and to confirm mathematical derivation, we implement Algorithm \ref{Alg:SPDG} with all hyper-parameters list in Table \ref{Tab:hyperparameters}. We conduct the experiments on the binary classification task with the dataset we created for $(\mc{X}, \mc{Y})$. However, the results turns out to be unsatisfactory. The average classification error rate is $30.8\%$, which is just about the error rate of majority guess: it constantly predicts class ``0'' regardless of the input data. We repeated the above experiment 5 times and collect the results as shown in Table \ref{Tab:UnigramResults}, which further validate that this method is nothing more than guessing majority class. 

\textbf{Problems of Non-sequence Language Model} The reason for bad results is that the Unigram language model is too weak. Unigram is a non-sequence language model that loses important information for unsupervised classification problem. As in \cite{Chen2016}, the sequence prior of output labels is a important structure to employ in unsupervised problems, especially when the order is high. However, the Unigram language model does not capture such structure. In next experiment we use Bigram language model that will show much better results.

\begin{table}[t]

\caption{The hyper-parameters used in training}
\begin{center}
\begin{tabular}{llr}
\hline
Hyper-parameter & Notation & Value  \\
\hline
Learning Rate 			&$\mu_\theta$         	& $10^{-6}$ \\
Learning Rate 			&$\mu_V$         		& $10^{-4}$ \\
Mini-batch Size 		&$N$       			& 10   \\
Training Size 			&$T_{trn}$	  		& 50,000   \\
Test Size 				&$T_{tst}$	  			& 5,000   \\
Validation Size 		&$T_{val}$	  		& 5,000  \\
\hline
\end{tabular}
\end{center}
\label{Tab:hyperparameters}
\end{table}

\begin{table}[t]

\caption{Experiment Results using Unigram}
\begin{center}
\begin{tabular}{llr}
\hline
 & $\mathbf{P_{LM}}$ & Error Rate  \\
\hline
Exp1 	&[0.692, 0.307]     &  30.7\%\\
Exp2  	&[0.385, 0.615]     &  38.5\%\\
Exp3  	&[0.583, 0.417]     &  41.6\%\\
Exp4  	&[0.692, 0.308]     &  30.8\%\\
Exp5  	&[0.667, 0.333]     &  33.3\%\\
\hline
\end{tabular}
\end{center}
\label{Tab:UnigramResults}
\end{table}

\subsection{The Case of Bigram}

\begin{figure}[t!]
		\centering
   		\includegraphics[width = .6\linewidth]{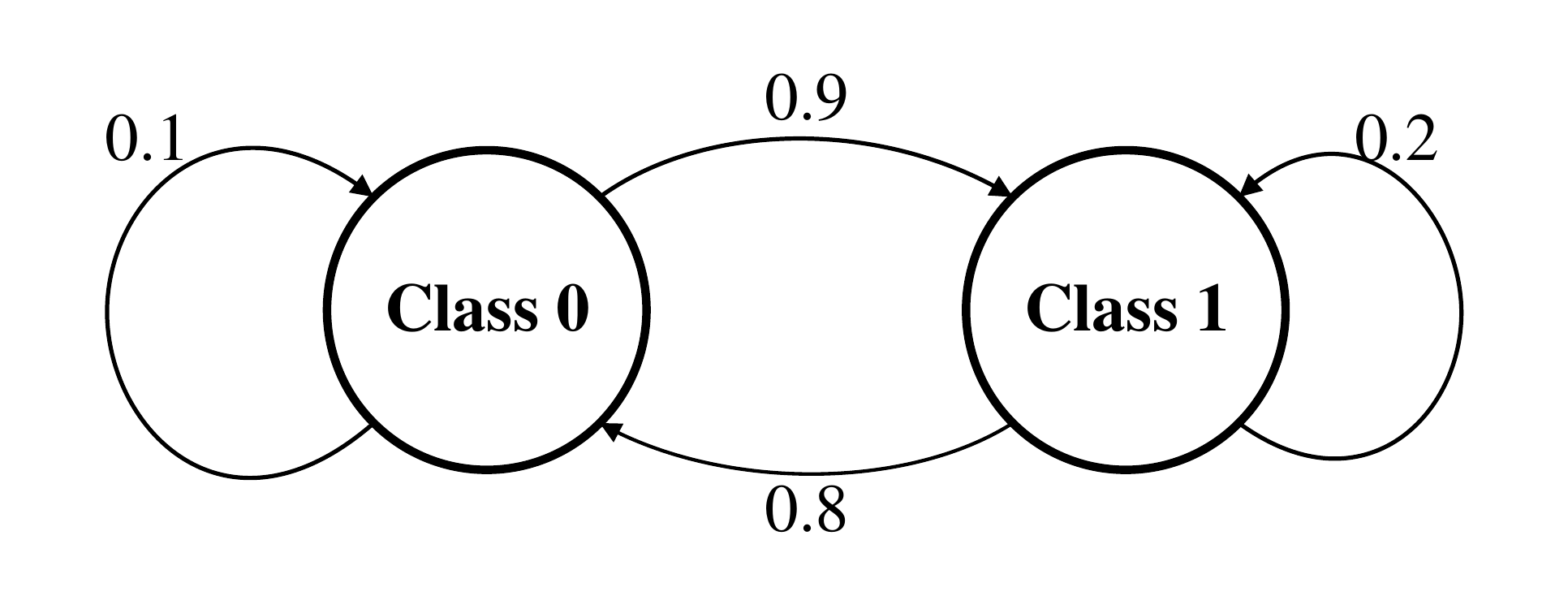}
  		 \vspace{-2mm}
   		\caption{The transition probability of output observation.}
   		\label{fig:transition}
	\end{figure}

\textbf{Data Generation} Different from Unigram data, now we need to create synthetic data that has additional Markov property: that the class label of future data point is solely dependent on the current label. We decided to use Hidden Markov  Model to create the dataset, by sampling from a first-order HMM. First, we sample $\mc{Y} = (y_1,\dots,y_T)$ from a given transition matrix $\mathbf{P_{Trains}}$. For example, 
	\begin{align}
		&\mathbf{P_{Trains}}
        =
        \begin{bmatrix}
        p(y_{t}=0|y_{t-1}=0) & p(y_{t}=1|y_{t-1}=0) \\ 
        p(y_{t}=0|y_{t-1}=1) & p(y_{t}=1|y_{t-1}=1)
        \end{bmatrix}
        =
        \begin{bmatrix}
        0.6 & 0.4 \\ 
        0.9 & 0.1
        \end{bmatrix}.
        \label{Equ:TransExample}
	\end{align}
where $P_{Trans}(i,j)=p(y_t=j|y_{t-1}=i)$. This transition relations are visualized in graph of Figure \ref{fig:transition}.

Then, similar to Unigram case, we use two Gaussian emission models to sample the data for $\mc{X} = (\mathbf{x_1} \dots \mathbf{x_T})$ given the sampled labels $\mc{Y} = (y_1,\dots,y_T)$: 
	\begin{align*}
    	&p(y_t=i|y_{t-1}=j)=P_{Trans}(i,j)
        \nn\\
        &~~~~~~~~~~~~~~~~~~and
        \nn\\
		&p(\mathbf{x_t} | y_t=0) \sim N(\mathbf{\mu_0} ; \mathbf{\Sigma})
        \nn\\
        &p(\mathbf{x_t} | y_t=1) \sim N(\mathbf{\mu_1} ; \mathbf{\Sigma})
	\end{align*}
In our experiment, we use the same setting for the emission models as in Unigram case, i.e. $\mathbf{\mu_0}=[-0.504,-0.264]^T$, $\mathbf{\mu_1}=[1.646,0.181]^T$ and $\mathbf{\Sigma}=0.4 \cdot \mathbf{I}$. A total of $T=60,000$ labels $\mc{Y}$ and data points $\mc{X}$ are created, and randomly split to 50,000 training, 5,000 testing and 5,000 validation partitions. 

Note that, instead of estimating the ground-truth language model  $\mathbf{P_{LM}}$ from the generated data, we directly calculate  $\mathbf{P_{LM}}$ from transition matrix $\mathbf{P_{Trans}}$ as follows. Due to Markov property, the language model ${P_{LM}}= {p(y_t,y_{t-1})}$ is actually joint probability, i.e. product of prior probability and conditional probability as: 
	\begin{align}
    {P_{LM}}=p(y_{t-1},y_t)= p(y_{t-1})p(y_{t}|y_{t-1})
    \label{Equ:Marginal}
    \end{align}
where $p(y_{t-1})$ is actually marginal probability being steady state of transition of $\mathbf{P_{Trans}}$ after infinite steps:
	\begin{align}
    \mathbf{p_{st}}= [p(y_{t-1}=0),p(y_{t-1}=1)] =\mathbf{e} \cdot \mathbf{P_{Trans}^{\infty}}
    \label{Equ:Steady}
    \end{align}
where $\mathbf{e}$ indicate unit row vector, i.e. $\mathbf{e}=(1,0) $. For our transition example in Eqn (\ref{Equ:TransExample}), one can calculate that $\mathbf{p_{st}}= [0.692,0.308]$. Thus, from Eqn. (\ref{Equ:TransExample}, \ref{Equ:Marginal}, \ref{Equ:Steady}), the Bigram language model $\mathbf{P_{LM}}$ can be calculated by:
	\begin{align}
		\mathbf{P_{LM}}
        &=
        \begin{bmatrix}
        p(y_{t-1}=0)p(y_{t}=0|y_{t-1}=0) & p(y_{t-1}=0)p(y_{t}=1|y_{t-1}=0) \\ 
        p(y_{t-1}=1)p(y_{t}=0|y_{t-1}=1) & p(y_{t-1}=1)p(y_{t}=1|y_{t-1}=1)
        \end{bmatrix}
        \nn\\
        &=diag(\mathbf{p_{st}}) \mathbf{P_{Trans}}
		\label{Equ:PLM}
	\end{align}
where $diag(\mathbf{p_{st}})$ stands for the diagonal matrix with vector $\mathbf{p_{st}}$ as its main diagonal. Using Eqn. (\ref{Equ:PLM}), one can calculate the Bigram language model for our example is:
	\begin{align*}
		&\mathbf{P_{LM}}
        =
        \begin{bmatrix}
        0.4154 & 0.2769 \\ 
        0.2769 & 0.0308
        \end{bmatrix}
    \end{align*}

\textbf{Experiment Results} To fully validate the method, we repeatedly generate 10 synthetic datasets using different HMMs stated in the previous part of the article. Each of the HMMs in the $i-th$ experiment has all the same settings except the transition matrix $\mathbf{P_{Trans}^{(i)}}$. Here list all the ${\mathbf{P_{Trans}^{(i)}}}$ we used in experiments:

\begin{align*}
        {\mathbf{P_{Trans}^{(1)}}}
        =
        \begin{bmatrix}
        0.1 & 0.9  \\ 
        0.8 & 0.2
        \end{bmatrix}
        ~~
		{\mathbf{P_{Trans}^{(2)}}}
        &=
        \begin{bmatrix}
        0.6 & 0.4  \\ 
        0.9 & 0.1
        \end{bmatrix}
        ~~
        {\mathbf{P_{Trans}^{(3)}}}
        =
        \begin{bmatrix}
        0.2 & 0.8  \\ 
        0.5 & 0.5
        \end{bmatrix}
        ~~{\mathbf{P_{Trans}^{(4)}}}
        =
        \begin{bmatrix}
        0.3 & 0.7  \\ 
        0.4 & 0.6
        \end{bmatrix}
        \nn\\
        {\mathbf{P_{Trans}^{(5)}}}
        =
        \begin{bmatrix}
        0.4 & 0.6  \\ 
        0.1 & 0.9
        \end{bmatrix}
        ~~{\mathbf{P_{Trans}^{(6)}}}
        &=
        \begin{bmatrix}
        0.5 & 0.5  \\ 
        0.7 & 0.3
        \end{bmatrix}
        ~~{\mathbf{P_{Trans}^{(7)}}}
        =
        \begin{bmatrix}
        0.7 & 0.3  \\ 
        0.8 & 0.2
        \end{bmatrix}
        ~~{\mathbf{P_{Trans}^{(8)}}}
        =
        \begin{bmatrix}
        0.8 & 0.2  \\ 
        0.4 & 0.6
        \end{bmatrix}
        \nn\\
        ~~{\mathbf{P_{Trans}^{(9)}}}
        &=
        \begin{bmatrix}
        0.5 & 0.5  \\ 
        0.4 & 0.6
        \end{bmatrix}
        ~~{\mathbf{P_{Trans}^{(10)}}}
        =
        \begin{bmatrix}
        0.9 & 0.1  \\ 
        0.3 & 0.7
        \end{bmatrix}
	\end{align*}

Then, we train the model using algorithm \ref{Alg:SPDG} with the same set of hyper-parameters in Table 1. We use training set for learning the model and validation set for early stop, test set for producing results. We repeat this process 10 times on the 10 generated dataset, and the test errors are listed in Table \ref{Tab:ResultsOfBigramExp}.

\begin{table}[t]
\caption{The classification error on test data using Bigram}
\begin{center}
\begin{tabular}{lccc}
\hline
 & Supervised & Unsupervised \\
\hline
SynData1 		&5.38\%	  	&5.55\%   \\
SynData2 		&3.62\%		&3.63\%  \\
SynData3		&2.95\% 	&2.96\%  \\
SynData4 		&2.79\% 	&2.79\%   \\
SynData5 		&1.06\%		&1.06\%    \\
SynData6 		&4.52\%		&4.54\%    \\
SynData7 		&5.66\%     &6.25\%   \\
SynData8 		&5.13\%	  	&5.66\%    \\
SynData9 		&3.89\%	  	&6.65\%    \\
SynData10 		&5.70\%	  	&5.70\%   \\
\hline
\end{tabular}
\end{center}
\label{Tab:ResultsOfBigramExp}
\end{table}

\textbf{Discussion of Experiment Results} From Table \ref{Tab:UnigramResults}, we find all the error rates in the unsupervised learning column are very low, which verify the SPDG method we introduce in the article being effective. 

Specifically, we first observe by comparing the numbers horizontally that the unsupervised SPDG method can achieve the almost the same performance as supervised learning on each dataset. We also observe that the margin between supervised and unsupervised learning for each experiment are mostly less than $1\%$. Furthermore, comparing the numbers vertically in the table, it seems that different performances are yielded by different transition matrix. However, the variance of performances over 10 experiments is actually caused by the data sampling from Gaussian models, not the label generation using $\mathbf{P_{Trans}}$, because the error rates of supervised and unsupervised for each experiment are consistent. Otherwise, the results of supervised ones should be all the same since this training method is independent from sequence property lying in the $\mathbf{P_{Trans}}$.

Recall the Unigram result we obtained previously (error rate $30.8\%$), with only changing the language model from Unigram to Bigram, the unsupervised model gains significant improvement in performance.  Therefore, we can conclude that the sequential structure is indeed an important prior to be used in supervised learning. The Bigram language model is capable of capturing this prior, despite that it is only the simplest sequential statistics prior.

\section{Conclusions \& Summary}
\label{SECTION:Conclusions}

In this article, we first surveyed major classes of unsupervised learning methods developed in the past. We then focused on a special class of unsupervised learning techniques designed for classification tasks without using labeled data in training via direct, end-to-end optimization. We motivated the methods using the prominent example of encryption technique called Caesar Cipher. In technical terms, we described a novel objective function for optimization in solving this class of unsupervised learning problems and introduced a new stochastic primal-dual gradient method for solving the optimization, accomplishing the desired unsupervised learning for classification. 

In this tutorial exposition, we thoroughly carried out the needed mathematical analysis and illustrated the step-by-step solutions to unsupervised learning for the cases of Unigram language model and Bigram language model, respectively. The readers should be able to follow the self-contained derivations we provided in this article step by step, without referring to other material. For the readers who want to implement the unsupervised learning methods without following the detailed derivation steps, we have provided a summary of the computation steps in the learning algorithm for the Unigram and Bigram cases in two separate columns in Table \ref{Tab:Summary}. 

One technical contribution of this article worth mentioning here is the data synthesis experiments, as described in Section \ref{SECTION:Experiment}, which we used to explain and evaluate the unsupervised learning algorithm. In these computational experiments, we generated a large set of synthetic sequence data, and use them to validate the mathematical analysis and derivation. We confirm that the optimization method of SPDG is capable of effectively exploiting the sequential structure of data on learning the mapping from input to output without the use of output labels.

The ideas explained in this article open up the potential for unsupervised learning that would have many practical applications involving sequential data in real world. An example is unsupervised character recognition and English spelling correction as explored in \citep{NIPS2017}. More recently, this approach is extended by \cite{Chen2019} to speech recognition where the output sequences contain both sequence and segmentation structures, creating the first success of fully unsupervised speech recognition following the earlier proposals in \citep{Chen2016,Deng2015}. Another future extension is to further exploit the structures of the input data and combine it with our method. For example, in \citep{eslami2018neural}, the authors proposed a Generative Query Network (GQN) to learn internal representations of scenes from different viewpoints and then use them to render the same scene of a different viewpoint. With such a learning process, GQN is able to learn representations that model the inherent input data regularity. Our method is orthogonal to GQN in that we can always combine our cost function with those that exploit the input structures, which could potentially lead to better unsupervised learning methods.

The exposition provided in this article has been limited to a binary sequence classification problem with label-free data in training the classifier parameters. This was made possible by making use of the property of sequential structure in the data as represented by language models. In practically all past work, sequential classification or pattern recognition problems have been tackled with supervised learning requiring labeled data, even if the sequential structure of the data was available \citep{He2008,Mesnil2013}. While the structure of the data exploited is expressed in terms of language models (both Unigram and Bigram), other ways of representing the structure of the data (e.g. graph) can also be exploited within the framework of unsupervised learning described in this tutorial, and we leave this extension to the interested readers.

\begin{sidewaystable}
\caption{Summary of Computation Steps and Comparisons of Using Unigram and Bigram}
\centering 
\renewcommand{\arraystretch}{1.8}
\begin{tabular}{|m{1cm}| >{\centering\arraybackslash}m{7.6cm}| >{\centering\arraybackslash}m{7.6cm}|}
\hline 
&Unigram & Bigram \\[1ex] 
\hline

$\mathbf{P_{LM}}$	& $\begin{array} {lcl} \begin{bmatrix} p_{0},~~p_{1}\end{bmatrix} \end{array}$ (Eqn. \ref{Equ:prior})& $\begin{bmatrix}
        p_{00} & p_{01} \\ 
        p_{10} & p_{11}
        \end{bmatrix}$ (Eqn. \ref{Equ:Bigram_PLM}) \\ [1ex]
\hline

$\mathbf{\overline{P_{LM}}} (\theta)$	& $\begin{bmatrix}
        \frac{1}{T}\sum_{t=1}^T p_{\theta,t}(0),~~\frac{1}{T} \sum_{t=1}^T p_{\theta,t}(1)
        \end{bmatrix}^T$ (Eqn. \ref{Equ:classifierstat})& $\frac{1}{T}\sum_{t=1}^T \mathbf{P_{t-1,\theta}} \mathbf{P_{t,\theta}}^T$ (Eqn. \ref{Equ:classifierstat})\\ [1ex]

\hline 
$\mc{J}(\theta)$	& $-\mathbf{P_{LM}}  \ln{\mathbf{\overline{P_{LM}}} (\theta)}$ (Eqn. \ref{Equ:CostUnigram})&  $-\left \langle \mathbf{P_{LM}} ~, ~\ln{\mathbf{\overline{P_{LM}}} (\theta)} \right \rangle$ (Eqn. \ref{Equ:Cost})\\ [1ex]

\hline 
$\mc{L}(\theta,\mathbf{V})$	
& $\!\begin{aligned}[t][\mathbf{P_{LM}} \odot  \mathbf{V}]  \mathbf{\overline{P_{LM}}} (\theta)  
   \\ +\mathbf{P_{LM}} \ln{(-\mathbf{V})}^T \end{aligned}$ ~~~~~(Eqn. \ref{Equ:finalMatrix})
& $\!\begin{aligned}[t] \frac{1}{T} \sum_{t=1}^{T} (\mathbf{p_{t-1,\theta}^T}   [\mathbf{P_{LM}} \odot  \mathbf{V}]  \mathbf{p_{t,\theta}} )   
     \\+\left \langle \mathbf{P_{LM}}, \ln{(-\mathbf{V})} \right \rangle \end{aligned}$ ~~~~~~(Eqn. \ref{Equ:finalMat})\\ [1ex]

\hline 
$\frac{\partial \mc{L}}{\partial \mathbf{\theta}}$	
& $\frac{1}{T} \sum_{t=1}^{T} \bigg \{
\gamma p_{t-1,\theta}(0) p_{t-1,\theta}(1)   [\mathbf{P_{LM}} \odot \mathbf{V}] \mathbf{X} \bigg \}$ (Eqn. \ref{Equ:finalDiv_theta})
&  $\!\begin{aligned}[t]\frac{1}{T} \sum_{t=1}^{T} \bigg \{
		\gamma P_{t-1,\theta}(0) P_{t-1,\theta}(1) 
		(
	[ \mathbf{X_{t-1}^T}  (\mathbf{P_{LM}} \odot \mathbf{V}) \mathbf{P_{t,\theta}}
        \\ + (\mathbf{P_{LM}} \odot \mathbf{V})^T    \mathbf{X_{t}^T} \mathbf{P_{t-1,\theta}} ]
        \bigg \} \end{aligned}$ ~~~~~(Eqn. \ref{Equ:finalUpdateBigram})\\ [1ex]

\hline 
$\frac{\partial \mc{L}}{\partial \mathbf{V}}$	& $\frac{\partial}{\partial \mathbf{V}}
    \bigg \{
    [\mathbf{P_{LM}} \odot  \mathbf{V}]  \mathbf{\overline{P_{LM}}} (\theta)  
    +\mathbf{P_{LM}} \ln{(-\mathbf{V})}^T  
    \bigg \}$ (Eqn. \ref{Equ:finalDiv_l})&  $\frac{1}{T} \sum_{t=1}^{T} \bigg \{
        \mathbf{P_{LM}} \cdot (\mathbf{P_{t-1,\theta} P_{t,\theta}^T}) + \frac{\mathbf{P_{LM}}}{\mathbf{V}}
        \bigg \}$ (Eqn. \ref{Equ:finalUpdateBigram})\\ [1ex]

\hline 
$\mathbf{V^0}$	& $-\frac{1}{\mathbf{\overline{P_{LM}}}^T (\theta^0)} $ (Eqn. \ref{Equ:MinimaUnigram})& $-\frac{T}{\sum_{t=1}^T \mathbf{P_{t-1,\theta^0}} \mathbf{P_{t,\theta^0}^T}}$ (Eqn. \ref{Equ:MinimaBigram})\\ [1ex]


\hline 
\end{tabular}
\label{Tab:Summary}
\end{sidewaystable}





\newpage
{
\bibliography{neurip_2019}
\bibliographystyle{icml2017}
}

\end{document}